\documentclass{article}

\usepackage[preprint]{neurips_2026}

\usepackage[utf8]{inputenc}
\usepackage[T1]{fontenc}
\usepackage{url}
\usepackage{booktabs}
\usepackage{amsfonts}
\usepackage{nicefrac}
\usepackage{microtype}
\usepackage{xcolor}

\usepackage{graphicx}
\usepackage{subcaption}
\usepackage{multirow}
\usepackage{placeins}
\usepackage{wrapfig}
\usepackage{amsmath}
\usepackage{amssymb}
\usepackage{mathtools}
\usepackage{amsthm}

\usepackage{hyperref}
\usepackage[capitalize,noabbrev]{cleveref}

\usepackage[textsize=tiny]{todonotes}

\newcommand{\com}[1]{\color{blue} #1 \normalcolor}

\theoremstyle{plain}
\newtheorem{theorem}{Theorem}[section]
\newtheorem{proposition}[theorem]{Proposition}
\newtheorem{lemma}[theorem]{Lemma}

\theoremstyle{definition}

\newtheorem{assumption}[theorem]{Assumption}
\theoremstyle{remark}
\newtheorem{remark}[theorem]{Remark}

\title{Latent Generative Solvers for Generalizable Long-Term Physics Simulation}

\author{%
  Zituo Chen \\
  Department of Mechanical Engineering \\
  Massachusetts Institute of Technology \\
  Cambridge, MA 02139 \\
  \texttt{zituo@mit.edu} \\
  \And
  Sili Deng \\
  Department of Mechanical Engineering \\
  Massachusetts Institute of Technology \\
  Cambridge, MA 02139 \\
  \texttt{silideng@mit.edu} \\
}

\begin{document}

\maketitle

\begin{abstract}
Reliable physics simulation demands two capabilities that today's neural PDE solvers do not deliver together: generalization across heterogeneous PDE families, and stability under long autoregressive rollouts. Deterministic operators accumulate error geometrically, while existing probabilistic solvers are confined to a single PDE family or short horizons. We close this gap with the \textbf{Latent Generative Solver} (LGS), three coupled components: (i) a Physics VAE (PhyVAE) compressing twelve PDE families into a shared latent manifold; (ii) a Pyramidal Flow-Forcing Transformer (PFlowFT) that generates the next latent by flow matching, conditioned on a per-trajectory context updated on the model's own predictions; and (iii) input noising during training, for which we derive a sufficient-condition contraction bound explaining the observed long-horizon stability. Pretrained on a 2.5\,M-trajectory, 16-system corpus at $128^2$, LGS matches the strongest deterministic baseline at one step, wins on 15/16 systems at both 5- and 10-step rollout, cuts 20-step L2RE from $56.1\%$ to $\mathbf{30.2\%}$, and uses $\mathbf{13}$--$\mathbf{77\times}$ less recurrent dynamics-step compute. It also adapts efficiently to a $256^2$ Kolmogorov flow held out from the pretraining corpus, dropping 1-step L2RE from $0.398$ to $0.129$ in five finetune epochs against U-AFNO's $0.653{\to}0.343$.
\end{abstract}

\section{Introduction}

A neural PDE solver is useful only if it does two things at once: predict across heterogeneous physical systems, and remain stable over long autoregressive rollouts. Today's solvers do one or the other but not both. Scalable operators~\citep{mccabe2023multiple,cao2024vicon,ye2025pdeformerfoundationmodelonedimensional,zhou2025unisolverpdeconditionaltransformersuniversal} cover many systems but, being deterministic, accumulate error geometrically once predictions feed back as inputs~\citep{lippe2023pderefinerachievingaccuratelong,ye2025recurrentneuraloperatorsstable}; probabilistic alternatives~\citep{huang2024diffusionpdegenerativepdesolvingpartial,wang2025fundiffdiffusionmodelsfunction,feng2025fluidzero} stabilize rollouts but stay tied to a single PDE family or a short horizon. Closing the gap requires rethinking both \emph{where} dynamics are modeled and \emph{how} the one-step transition is formed.

\paragraph{Why latent.} \emph{Raw fields don't transfer; latent representations can.} PDE states are dominated by discretization- and resolution-specific structure that does not survive a change of system. A pretrained latent space compresses each state into features that preserve dynamical structure and discard representation-specific clutter, giving a substrate on which heterogeneous systems share dynamics and on which low-dimensional perturbations support generative modeling.

\paragraph{Why generative.} \emph{A deterministic next-step map cannot undo its own errors.} A single point estimate drifts off the training distribution under accumulated error and never returns. A generative transition instead samples the next state from a distribution centered on the latent support, inducing a denoising vector field that transports distribution-shifted states back to where the decoder reconstructs faithfully---an auto-correcting mechanism not induced by standard one-step deterministic regression objectives.

\begin{figure}[!t]
\centering
\begin{minipage}[c]{0.50\linewidth}
\centering
\includegraphics[width=\linewidth]{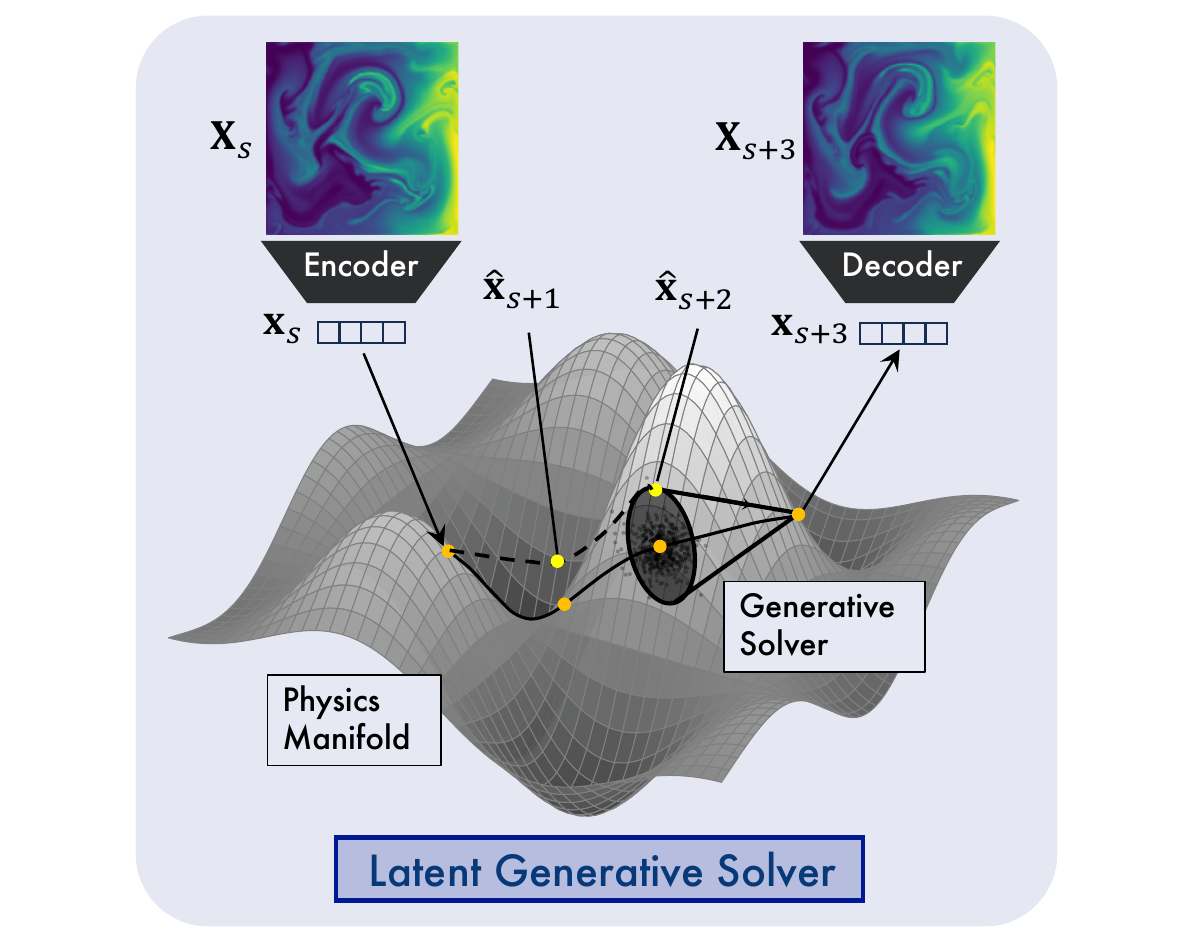}
\end{minipage}\hfill
\begin{minipage}[c]{0.46\linewidth}
\caption{Overview of LGS. Heterogeneous PDE states encode into a shared latent manifold. \emph{Training} (one step $\mathbf{x}_s\!\to\!\mathbf{x}_{s+1}$): a latent input is noised, then transported to the clean next state via flow matching. \emph{Inference} (autoregressive rollout): off-manifold states $\hat{\mathbf{x}}_s$ accumulating rollout error are transported back by the same vector field, projecting predictions back onto the latent manifold without explicit inference-time noising.}
\label{fig:overview}
\end{minipage}
\end{figure}

\paragraph{Our framework.} We instantiate these principles as the \emph{Latent Generative Solver} (LGS, Fig.~\ref{fig:overview}), three components in a single autoregressive pipeline. A Physics VAE (\textbf{PhyVAE}) compresses fields from twelve PDE families into a shared $16{\times}16{\times}16$ latent manifold, separating spatial representation from temporal dynamics. A Flow-Forcing Transformer (\textbf{PFlowFT}) generates the next latent by flow matching, conditioned on a per-trajectory \emph{context} $c$ updated on the model's own predictions and pooled through a temporal pyramid for bounded attention cost. \textbf{Input noising} perturbs each latent input at training only; the trained map then auto-diagnoses rollout deviations as residual noise and projects them back onto the manifold, converting the iterated rollout from a geometric error amplifier into a per-step contraction (\cref{sec:errcompare_main}) absent from standard one-step regression training.

\paragraph{Contributions.}

\begin{itemize}
    \item A \textbf{time-forward generative solver}, the Pyramidal Flow-Forcing Transformer (PFlowFT), built on three principles---noised-input correction, flow-forcing of the conditioning variable on model-generated states, and pyramidal aggregation of long histories---that target failure modes specific to autoregressive rollout. This extends previous static latent generative models such as DiT~\citep{peebles2023scalablediffusionmodelstransformers} and SiT~\citep{ma2024sitexploringflowdiffusionbased} to the iterated-rollout setting, supported by a sufficient-condition stability analysis (\cref{sec:errcompare_main}) that explains the observed long-horizon contraction.
    \item \textbf{Long-horizon stability across heterogeneous PDEs}: lowest L2RE on \textbf{15 of 16} systems at both 5- and 10-step rollout, $30.2\%$ vs.\ $56.1\%$ at 20 steps (LGS vs.\ U-AFNO), at $13$--$77\times$ less recurrent dynamics-step compute. LGS also adapts efficiently to a $256^2$ Kolmogorov flow held out from pretraining.
    \item A unified \textbf{233\,GB benchmark} consolidating four public PDE collections (FNO-v, PDEArena, PDEBench, The Well) into a fair-comparison protocol over 12 PDE families at $128^2$, with reproducible deterministic and generative baselines at matched parameter and training budgets.
\end{itemize}

\section{Related Work}

We position LGS along three axes: (i) latent vs.\ ambient dynamics, (ii) deterministic vs.\ probabilistic transitions, (iii) single-system vs.\ shared evolution. Prior work covers segments individually; LGS combines all three with an input-noised flow-forcing training rule designed for long autoregressive rollout across heterogeneous PDE families.

\paragraph{Ambient deterministic operators.} Spectral / grid-based neural PDE solvers~\citep{li2020fourier,lu2021learning,li2024physics,li2022transformer,alkin2024universal,kovachki2023neural,azizzadenesheli2024neural,kramer2024learning} and Transformer-style multi-system pretraining~\citep{yang2023context,cao2024vicon,mccabe2023multiple,hao2024dpot,liu2024prose,sun2025towards,lorsung2024physics,zhu2025pimfmphysicsinformedmultimodalfoundation} deliver strong one-step accuracy on the discretized field, but autoregressive rollouts inherit the global Lipschitz constant of the true dynamics, so error compounds geometrically~\citep{lippe2023pderefinerachievingaccuratelong,ye2025recurrentneuraloperatorsstable}. LGS predicts in a learned latent space and replaces the deterministic map with a generative transition.

\paragraph{Single-system generative solvers.} Diffusion / flow-matching PDE solvers stay tied to a single system: inverse-reconstruction methods~\citep{huang2024diffusionpdegenerativepdesolvingpartial,du2024conditional,yao2025guideddiffusionsamplingfunction,wang2025fundiffdiffusionmodelsfunction} guide diffusion to recover fields from sparse measurements; spatiotemporal samplers~\citep{lippe2023pderefinerachievingaccuratelong,hu2025waveletdiffusionneuraloperator,feng2025fluidzero,oommen2025integratingneuraloperatorsdiffusion} model the next state of one family at short horizons. LGS retains the generative prior, drops the per-system constraint, and adds an explicit long-horizon mechanism.

\paragraph{Latent reduced-order models.} CROM~\citep{lee2020model} and DiNo~\citep{yin2023continuous} pioneered latent dynamics, but each fits a \emph{per-system} encoder and evolves it deterministically. LGS preserves the latent-space premise, replaces the deterministic ODE with a probabilistic flow, and trains a \emph{shared} encoder across $12+$ PDE families.

\paragraph{Latent generative pipelines + diffusion forcing.} DiT~\citep{peebles2023scalablediffusionmodelstransformers} and SiT~\citep{ma2024sitexploringflowdiffusionbased} are single-shot encode-then-generate pipelines; their one-step quality bound says nothing about whether an iterated rollout stays on the data manifold. Diffusion forcing~\citep{chen2024diffusionforcingnexttokenprediction,arora2023exposurebiasmattersimitation,xie2025progressive,zhou2024upscale,gao2024autoregressivemovingdiffusionmodels,gao2025ca2vdmefficientautoregressivevideo,huang2025selfforcingbridgingtraintest} addresses analogous exposure bias in autoregressive video / language models by training on partially denoised histories. We adapt this idea to PDE dynamics in latent space; the input-noised training distribution yields the contraction in \cref{sec:errcompare_main}---tying the empirical long-horizon gains to a closed-form bound prior diffusion-forcing work does not provide.

\section{Latent Generative Solver}
\label{methods}

\emph{LGS factors the long-horizon problem into a sharable spatial encoder, a generative one-step transition, and an autoregressively updated context.} We start with autoregressive one-step prediction over a fixed-length history window,
\begin{equation}
    p(\mathbf{X}_{s+1}\mid \mathbf{X}_{0:s}),\qquad s=0,1,2,\dots,
\end{equation}
where $\mathbf{X}_s\!\in\!\mathcal{X}\!\subset\!\mathbb{R}^{H\times W\times C}$ is the physical state at step~$s$; we use a 4-frame history $\mathbf{X}_{s-3:s}$ throughout. Each transition $s\!\to\! s{+}1$ also uses a transport time $t\!\in\![0,1]$ for flow-based generation; $s$ indexes physical time, $t$ the flow.

\paragraph{Where the design effort goes.} The scaffolding---VAE encode + latent transformer generation---is shared with single-shot latent generative models such as DiT~\citep{peebles2023scalablediffusionmodelstransformers} and SiT~\citep{ma2024sitexploringflowdiffusionbased}; SiT is our PFlowFT backbone. Our contribution is the \emph{time-forward} method running inside it, addressing failure modes specific to iteration: (i) input-noised training that introduces a per-step contraction channel (\cref{sec:errcompare_main}); (ii) flow forcing that re-anchors conditioning on model-generated states; (iii) temporal-pyramid aggregation that bounds per-step attention cost as the rollout horizon grows.

\begin{figure}[t]
    \centering
    \includegraphics[width=0.82\linewidth]{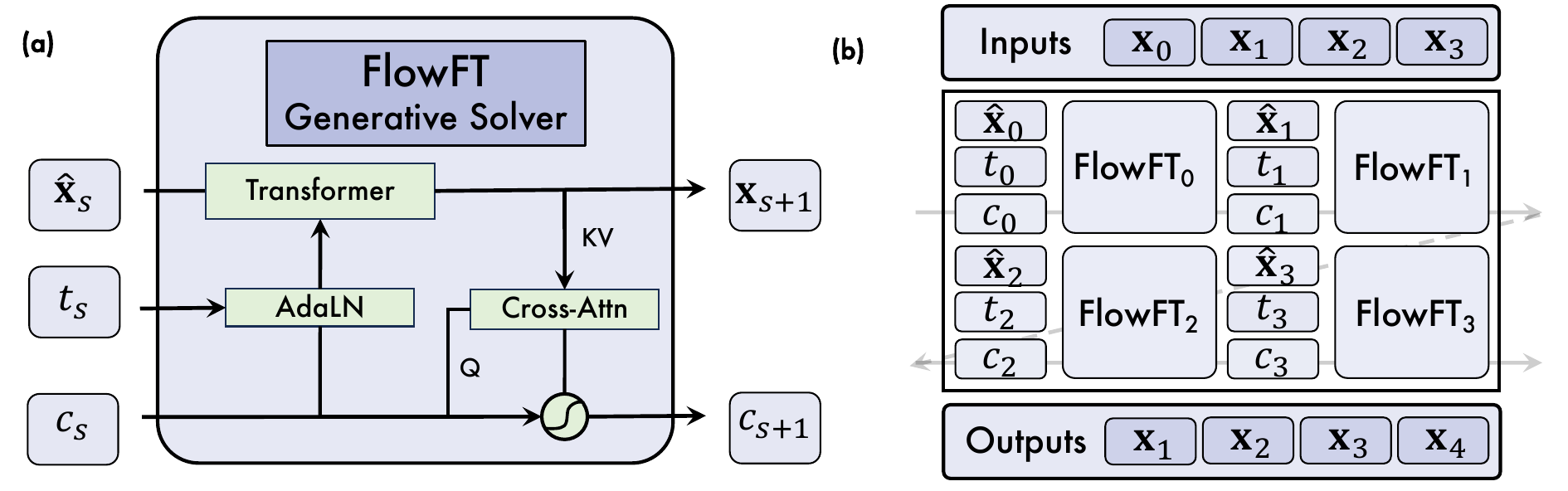}
    \caption{Architecture of the generative solver, Flow Forcing Transformer (FlowFT). (a) A single FlowFT block predicts $\mathbf{x}_{s+1}$ from current latent $\hat{\mathbf{x}}_s$, transport time $t_s$, and per-trajectory context $c_s$, then updates $c_{s+1}$ via gated cross-attention on the \emph{predicted} state $\hat{\mathbf{x}}_{s+1}$ (not ground truth). (b) Sequential training unroll along physical time; context inherits autoregressively.}
    \label{fig:model}
\end{figure}

\subsection{Learning Latent Physics with PhyVAE}

A Physics Variational Autoencoder (PhyVAE), built on the SD-VAE backbone~\citep{rombach2022highresolutionimagesynthesislatent}, maps raw fields $\mathbf{X}\!\in\!\mathbb{R}^{H\times W\times C}$ to a $12\times$ smaller latent state $\mathbf{x}\!\in\!\mathbb{R}^{16\times 16\times 16}$ shared across all $12$ PDE families. The encoder is trained once on the full corpus and frozen; the decoder is used only for reconstruction and evaluation. Latent trajectories are precomputed and cached, removing per-step encoding from solver training and making long-horizon autoregressive learning tractable.

Operating in this space buys three things at once. (i) The encoder absorbs discretization-, resolution-, and channel-specific structure, so systems with similar dynamics occupy nearby latent regions---system-level transfer becomes a property of the representation, not a property the solver has to learn. (ii) Decoupling the spatial encoder from the temporal predictor lets each focus on what it does best, simplifying optimization and admitting model reuse across objectives. (iii) The compact, continuous latent geometry is the substrate that makes input noising (\cref{sec:method:flow}) physically meaningful: small perturbations in latent space correspond to plausible nearby physical states, while the same perturbation applied to the original field is just pixel noise.

\subsection{Generative Latent Transition}\label{sec:method:flow}

The latent transition has to handle two distinct stability bottlenecks over long rollouts. We separate them by factorizing through a per-trajectory dynamics descriptor $c_s$ inferred from history:
\begin{equation}
\label{eq:factor_main}
    p(\mathbf{x}_{s+1}\mid \mathbf{x}_{0:s})
    =\int p(\mathbf{x}_{s+1}\mid \mathbf{x}_s,c_s)\,p(c_s\mid \mathbf{x}_{0:s-1})\,dc_s.
\end{equation}
The factorization exposes (i) \emph{state drift} in $\mathbf{x}_s$, which is itself model-generated under autoregressive rollout and drifts off the clean training distribution; and (ii) \emph{context drift} in $c_s$, from finite history and ambiguity over which PDE family the trajectory belongs to. We address (i) with a flow-matching transition equipped with input noising, and (ii) with a flow-forcing scheme that re-anchors $c_s$ on model-generated states. Together they form the Flow-Forcing Transformer (FlowFT).

\paragraph{Flow matching with input noising.}
We model $p(\mathbf{x}_{s+1}\mid \mathbf{x}_s,c_s)$ via a conditional probability flow, suppressing $s$ and writing $(\mathbf{x}_0,\mathbf{x}_1)$ for consecutive latent states. At test time $\mathbf{x}_0$ carries rollout error, so we train on a noised input distribution. Define the $k$-noised input
\begin{equation}
\label{eq:softsource_main}
    \tilde{\mathbf{x}}_0^{\,k}\coloneq (1-k)\mathbf{x}_0+k\mathbf{z},\qquad
    \mathbf{z}\sim\mathcal{N}(0,\mathbf{I}),\quad k\in[0,1],
\end{equation}
inducing a distribution $\tilde{\pi}_0^k$, where $k$ is the \emph{noise level}. Bridge states are $\mathbf{x}_t^{\,k}\!\coloneq\!(1{-}t)\tilde{\mathbf{x}}_0^{\,k}+t\mathbf{x}_1$ for $t\!\sim\!\mathcal{U}(0,1)$; their sample-wise velocity is $\mathbf{u}_t^{\,k}\!=\!(\mathbf{x}_1\!-\!\mathbf{x}_t^{\,k})/(1{-}t)$. We instantiate the transition as a learned network $\mathbf{g}_\theta$ that, following JiT~\cite{li2026jit}, predicts the endpoint $\hat{\mathbf{x}}_1\!=\!\mathbf{g}_\theta(\mathbf{x}_t^{\,k},t,c)$ and induces the velocity $\mathbf{v}_\theta\!\coloneq\!(\hat{\mathbf{x}}_1\!-\!\mathbf{x}_t^{\,k})/(1{-}t)$, giving
\begin{equation}
\label{eq:flowloss_main}
    \mathcal{L}_{\mathrm{FM}}
    =\mathbb{E}\big\|\mathbf{v}_\theta-\mathbf{u}_t^{\,k}\big\|_2^2
    =\mathbb{E}\big\|(\hat{\mathbf{x}}_1\!-\!\mathbf{x}_1)/(1{-}t)\big\|_2^2,
\end{equation}
clamping $(1{-}t)\!\ge\!0.05$ for stability. At inference we integrate the probability-flow ODE \emph{from the clean autoregressive output, with no further noising}:
\begin{equation}
\label{eq:pfode_main}
    \tfrac{d\mathbf{x}}{dt}=\mathbf{v}_\theta(\mathbf{x},t,c),\quad \mathbf{x}(0)=\mathbf{x}_0,
\end{equation}
from $0$ to $1{-}\epsilon$ and take the endpoint as the predicted next state. Training with $k\!>\!0$ exposes the model to a $k$-thickened input distribution; at inference, the trained map auto-diagnoses rollout-induced deviations as the same kind of residual noise and re-anchors them onto the training-latent support---no explicit noising is reintroduced.

\paragraph{Flow forcing on the data manifold.}
The context evolves as a filtering process $c_s\!\sim\!p(c_s\mid c_{s-1},\mathbf{x}_s)$. At test time this update depends on model-generated $\hat{\mathbf{x}}_s$, while naive training would drive it from ground-truth states---creating exposure bias. We close the gap by updating $c_{s+1}$ on the predicted state via gated cross-attention (\cref{fig:model}a):
\begin{equation}
\label{eq:condupdate}
    c_{s+1} = \mathrm{Gate}\!\left(c_s,\,\mathrm{Attn}(\hat{\mathbf{x}}_{s+1},c_s)\right),
\end{equation}
so all subsequent conditioning during training is driven by model-generated trajectories; t-SNE projections of the inferred $c_s$ (\cref{fig:conditions}) confirm that the update keeps each system on a tight context cluster rather than drifting. The raw input window holds the four most-recent full-resolution latents; older history accumulates into $c_s$. To bound attention cost over this growing context, $c_s$ is summarized at exponentially-spaced temporal strides (a pyramid in time), reflecting the approximate Markovian property of many PDEs and reducing per-step attention by $\sim\!4\times$ vs.\ a flat history (\cref{tab:profiling}), yielding the Pyramidal FlowFT (PFlowFT).

\subsection{Training and Inference}

We adopt a two-stage training protocol.

\paragraph{Stage I: Latent Physics Encoding.}
PhyVAE learns a compact latent physics space by reconstructing individual system states while regularizing the latent distribution toward a standard normal prior. After convergence, the encoder is frozen and used to precompute all latent trajectories.

\paragraph{Stage II: Generative Dynamics Learning.}
PFlowFT is trained on cached latent trajectories with sequential flow forcing. Per rollout step $s$ in a trajectory: sample $t\!\sim\!\mathcal{U}(0,1)$ and $\mathbf{z}\!\sim\!\mathcal{N}(0,\mathbf{I})$, form the bridge $\mathbf{x}_t^k=(1{-}t)\big((1{-}k)\mathbf{x}_s+k\mathbf{z}\big)+t\,\mathbf{x}_{s+1}$, predict $\hat{\mathbf{x}}_{s+1}=\mathbf{g}_\theta(\mathbf{x}_t^k,t,c_s)$, accumulate the FM loss \eqref{eq:flowloss_main}, and update $c_{s+1}$ via \eqref{eq:condupdate}. Training $k$ is fixed per model; gradients pass through $\hat{\mathbf{x}}_{s+1}$ in the context update. Older history is pyramid-pooled into $c_s$ for bounded attention cost.

\paragraph{Inference and Rollout.}
Integrate the PF-ODE \eqref{eq:pfode_main} from the current latent (no noising) to predict the next state; decode predicted latents to ambient space with the PhyVAE decoder.

\subsection{A Sufficient-Condition Stability Analysis}
\label{sec:errcompare_main}

\emph{The transition design has to justify itself against the rollout-error problem it claims to solve.} Below, we contrast how error propagates under a deterministic operator vs.\ our $k$-noised flow-matching transition. \emph{Scope:} the bound is a sufficient-condition stability analysis, not an unconditional guarantee---it holds on the rollout support under (a) deviations within the directions covered by training-time noising and (b) scaled-Lipschitz PF-ODE residual along relevant trajectories. These conditions are motivated rather than empirically certified; the strongest indirect evidence is \cref{tab:k_sweep}, where the predicted monotone $L_T(1{-}k)$ scaling is observed.

\paragraph{Deterministic compounding.} For a deterministic operator $f_\theta\!:\!\mathcal{X}\!\to\!\mathcal{X}$ trained by one-step regression on $L$-Lipschitz true dynamics $\Phi$, with rollout error $\delta_s\!\coloneq\!\mathbf{x}_s\!-\!\mathbf{x}_s^*$ and one-step error $e_s\!\coloneq\!f_\theta(\mathbf{x}_s)\!-\!\Phi(\mathbf{x}_s)$, the recursion $\|\delta_{s+1}\|\!\le\!\|e_s\|+L\|\delta_s\|$ unrolls to $\|\delta_s\|\!\le\!\sum_{j<s}L^{s-1-j}\|e_j\|$. When $L\!>\!1$ small one-step errors amplify geometrically; the training objective cannot mitigate this (full statement in Appendix~\ref{erracc1}).

\paragraph{Probability-flow with input noising.} The generative transition integrates the PF-ODE \eqref{eq:pfode_main} \emph{directly from the rollout output} $\mathbf{x}_s$---no inference-time noising. The training-time input noising at level $k$ instead shapes the trained map's behavior: it has been trained to map a $k$-thickened input distribution to clean targets, so it auto-diagnoses any rollout-induced deviation as residual noise of the same form and projects it back. Two facts compose into the per-step contraction.

\begin{lemma}[Trained map auto-contracts the carry-over]
\label{lem:softening_main}
Let $\mathbf{x}_s,\mathbf{x}_s^*$ be the rollout and ideal autoregressive outputs at step $s$, with $\delta_s\!=\!\mathbf{x}_s\!-\!\mathbf{x}_s^*$. The trained map $\mathcal{T}^*$, supplied with $k$-noised pairs $((1{-}k)\mathbf{x}_0\!+\!k\mathbf{z},\,\mathbf{x}_1)$ at training and outputs lying on the latent manifold, has effective Lipschitz $L_T(1{-}k)$ on the rollout support: input deviations $\delta_s$ are contracted to at most $(1{-}k)\|\delta_s\|$ in the noise-direction component before the residual flow propagates them.
\end{lemma}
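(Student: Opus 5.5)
The plan is to write the trained transition as a composition of two maps and bound the Lipschitz constant of each on the rollout support. The first map is the affine input-softening map $\mathcal{S}_k(\mathbf{x}) = (1-k)\mathbf{x} + k\mathbf{z}$ implicit in the training distribution $\tilde{\pi}_0^k$. The second is the residual PF-ODE flow $\mathcal{R}$, which integrates \eqref{eq:pfode_main} from $t=0$ to $1-\epsilon$ with the trained velocity $\mathbf{v}_\theta$ and conditioning $c$. The claim is that $\mathcal{T}^*$ behaves, on the rollout support, like $\mathcal{R}\circ\mathcal{S}_k$. The bound $L_T(1-k)$ then follows from multiplying the Lipschitz constants, with $L_T$ the scaled-Lipschitz constant of the residual flow in assumption (b) of \cref{sec:errcompare_main}.

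\emph{Step 1: characterize the optimal endpoint predictor.} Minimizing \eqref{eq:flowloss_main} gives $\mathbf{g}^*(\mathbf{x}_t,t,c) = \mathbb{E}[\mathbf{x}_1 \mid \mathbf{x}_t^{\,k}=\mathbf{x}_t, c]$. The optimizer therefore depends on the input $\mathbf{x}_0$ only through the conditional law of the bridge. Since $\mathbf{x}_t^{\,k} = (1-t)(1-k)\mathbf{x}_0 + (1-t)k\mathbf{z} + t\mathbf{x}_1$, the clean input enters with weight $(1-t)(1-k)$. I would differentiate the conditional expectation (a Tweedie-type identity) in the direction of a perturbation $\delta$ of $\mathbf{x}_0$. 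The aim is to show that the sensitivity of the predicted endpoint to $\delta$ factors as $(1-k)$ times a quantity controlled by the conditional covariance.

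\emph{Step 2: use assumption (a).} Decompose $\delta_s = \delta_s^{\parallel} + \delta_s^{\perp}$. Here $\delta_s^{\parallel}$ lies in the span of directions excited by the Gaussian training noise restricted to the rollout support, and $\delta_s^{\perp}$ is the remainder. Assumption (a) lets us treat $\delta_s^{\perp}$ as negligible.

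\emph{Step 3: bound the parallel component.} A deviation $\delta_s^{\parallel}$ at the input is indistinguishable, to the trained model, from a realization of $k\mathbf{z}$. Its effective footprint after softening is at most $(1-k)\|\delta_s\|$: this is the contraction in the noise direction claimed in \cref{lem:softening_main}.

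\emph{Step 4: propagate through the flow.} Apply a Grönwall argument to \eqref{eq:pfode_main}, using the scaled-Lipschitz assumption on $\mathbf{v}_\theta$ along the relevant trajectories. This gives $\|\mathcal{R}(\mathbf{a})-\mathcal{R}(\mathbf{b})\| \le L_T\|\mathbf{a}-\mathbf{b}\|$. Composing with Step 3 yields $\|\mathcal{T}^*(\mathbf{x}_s)-\mathcal{T}^*(\mathbf{x}_s^*)\| \le L_T(1-k)\|\delta_s\|$. The manifold hypothesis on the outputs ensures the decoder-relevant error is the latent error, so the bound is meaningful.

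The main obstacle is Step 3. It must identify the inference map, which integrates from the clean $\mathbf{x}_s$ without noising, with the composition $\mathcal{R}\circ\mathcal{S}_k$ seen in training. Strictly, the network never receives $\mathcal{S}_k(\mathbf{x}_s)$ at test time, and the $(1-k)$ factor has to come from the learned velocity field itself. My first attempt would be the Tweedie/covariance route of Step 1: bound the Jacobian $\partial \mathbf{g}^*/\partial \mathbf{x}_0$ along the noise directions by $(1-k)$ times a posterior-concentration constant, and absorb that constant into $L_T$. If this does not close, I would fall back on stating the identification as an explicit modeling hypothesis, consistent with the paper's framing of the result as a sufficient condition. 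Under that hypothesis the lemma reduces to the elementary Lipschitz composition above.
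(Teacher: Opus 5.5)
Your proposal follows the paper's own route and inherits its gap, because the paper does not prove this lemma either. Its Remark offers two heuristic reads. The first is ``shared-noise cancellation'', $\tilde{\mathbf{x}}^k-\tilde{\mathbf{y}}^k=(1-k)(\mathbf{x}-\mathbf{y})$ composed with $L_T$, which is your $\mathcal{R}\circ\mathcal{S}_k$. The second is Bayesian shrinkage, which is your Step~1. The paper then postulates the bound as Assumption~\ref{ass:idealmaplip_app}, and that assumption is all the proof of Theorem~\ref{thm:multistep_app} uses. So your fallback is the only form in which the statement is actually used, and you correctly identify the crux. But Steps~1 and~3 cannot be made to close.

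\textbf{Why Steps~1 and~3 fail.} At inference $\mathbf{x}_s$ is fed to the network directly. The relevant derivative is therefore with respect to the network input, not with respect to $\mathbf{x}_0$ through the bridge. The chain-rule factor $(1-t)(1-k)$ belongs to the training pipeline and never acts on $\delta_s$. The same holds for $\mathcal{S}_k$ itself and for the shared $\mathbf{z}$, which the paper calls idealized. Worse, a model trained to read its input as $(1-k)\mathbf{x}_0+k\mathbf{z}$ learns to \emph{undo} that factor, so the intuition in Step~3 runs backwards.

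Concretely, take scalar $\mathbf{x}_0\sim\mathcal{N}(0,\sigma^2)$, $\mathbf{x}_1=a\mathbf{x}_0$ with $a>0$, and no context, so the $k=0$ constant is $a$. Tweedie gives $\partial_{\mathbf{x}}\mathbb{E}[\mathbf{x}_1\mid\tilde{\mathbf{x}}_0^k=\mathbf{x}]=a\frac{1-k}{k^2}\mathrm{Var}(\mathbf{x}_0\mid\tilde{\mathbf{x}}_0^k)=a\frac{(1-k)\sigma^2}{(1-k)^2\sigma^2+k^2}$. The ``posterior-concentration constant'' you want to absorb into $L_T$ therefore depends on $k$.
\begin{itemize}
\item At $\sigma=1$ this is $a\frac{1-k}{(1-k)^2+k^2}=a(1+k+O(k^3))$. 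This is the paper's own shrinkage formula, and its expansion is $1+k$, not $1-k$.
\item It exceeds $a(1-k)$ for every $k\in(0,1)$, and exceeds $a$ for $k<1/2$.
\item As $\sigma\to\infty$ it tends to $a/(1-k)$.
\end{itemize}

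Moreover, $\mathbf{g}^*(\cdot,0,c)$ is not the transition; the inference map is the PF-ODE flow. In this example $\mathbf{v}^*(\mathbf{x},t)=(\dot s_t/s_t)\,\mathbf{x}$ with $s_t^2=\mathrm{Var}(\mathbf{x}_t^k)$. Hence, as $\epsilon\to0$, the flow map is multiplication by $s_1/s_0=a\sigma/\sqrt{(1-k)^2\sigma^2+k^2}$, which is $\ge a$ at $\sigma=1$. At $k=1$ the lemma asserts a constant map, yet the flow is $\mathbf{x}\mapsto a\mathbf{x}$. So with a $k$-independent $L_T$ the claimed bound fails already here. With a $k$-dependent $L_T$, the factor $(1-k)$ carries no content.

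\textbf{Step~4 cites the wrong hypothesis.} The paper's scaled-Lipschitz condition (Assumption~\ref{ass:scaledlip_app}) constrains the residual $\mathbf{r}=\mathbf{v}_\theta-\mathbf{v}^*$. Through Lemma~\ref{lem:rescale_app} it feeds only the additive term $C\sup_t\|\boldsymbol{\eta}(t)\|$. $L_T$ is instead the Lipschitz constant of the \emph{ideal} map, applied to term (II), $\mathcal{T}^{c_s}(\mathbf{x}_s)-\mathcal{T}^{c_s}(\mathbf{x}_s^*)$. The paper attributes its boundedness to the one-sided condition~\ref{ass:idealstab_app}.

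A two-sided Gr\"onwall estimate on $\mathbf{v}_\theta=(\mathbf{g}_\theta-\mathbf{x})/(1-t)$ only gives $\exp\big(\int_0^{1-\epsilon}\ell(t)/(1-t)\,dt\big)$, which can grow like $\epsilon^{-\ell}$. A one-sided bound is what cancels the $1/(1-t)$ singularity; that is the role of $y=\Delta/(1-t)$ in Lemma~\ref{lem:rescale_app}. If you want Lipschitz control of the trained flow instead, combine both assumptions: $\langle\mathbf{x}-\mathbf{y},\mathbf{v}_\theta(\mathbf{x},t,c)-\mathbf{v}_\theta(\mathbf{y},t,c)\rangle\le\frac{L_r(t)+m(t)-1}{1-t}\|\mathbf{x}-\mathbf{y}\|^2$. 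Then add and subtract $\mathcal{T}_\theta^{c_s}(\mathbf{x}_s^*)$ rather than $\mathcal{T}^{c_s}(\mathbf{x}_s)$.

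\textbf{Step~2 does no work.} $\mathbf{z}\sim\mathcal{N}(0,\mathbf{I})$ is isotropic, so every direction is a noise direction and $\delta_s^{\perp}=0$. Step~3 then carries the entire claim.

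The defensible write-up is the one you mention last. State the $L_T(1-k)$ bound as a hypothesis, which is exactly Assumption~\ref{ass:idealmaplip_app}. Present Steps~1 and~3 as motivation only, and note that they do not yield that functional form.
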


\noindent The contraction is the trained map's auto-projection: deviations along the noise axis are removed up to factor $(1{-}k)$ at $t\!=\!0$, then the PF-ODE propagates the surviving deviation while injecting a residual error at every $t$. Writing $\mathbf{r}\!\coloneq\!\mathbf{v}_\theta-\mathbf{v}^*$ and $\boldsymbol{\eta}(t)\!\coloneq\!(1{-}t)\mathbf{r}(\mathbf{x}^*(t),t,c)$, Grönwall's inequality applied to the scaled residual gives a one-step deviation bounded by $C\sup_t\|\boldsymbol{\eta}(t)\|$ with $C$ the Grönwall constant of Lemma~\ref{lem:rescale_app}. Composing the two:

\begin{theorem}[Multi-step rollout contraction]
\label{thm:multistep_main}
Under the scaled-residual Lipschitz condition $\|(1{-}t)(\mathbf{r}(\mathbf{x},t,c)\!-\!\mathbf{r}(\mathbf{y},t,c))\|\!\le\!L_r(t)\|\mathbf{x}\!-\!\mathbf{y}\|$ on the rollout support, and with $L_T$ the Lipschitz constant of the ideal PF-transition $\mathcal{T}^{k,c}$,
\begin{equation}
\label{eq:flow_multistep_main}
\mathbb{E}\|\delta_{s+1}\|\;\le\;\underbrace{L_T(1-k)\,\mathbb{E}\|\delta_s\|}_{\text{contracted carry-over (Lem.~\ref{lem:softening_main})}}\;+\;\underbrace{C\,\sup_{t\in[0,1)}\|\boldsymbol{\eta}(t)\|}_{\text{PF-ODE residual (Grönwall)}}.
\end{equation}
Whenever $L_T(1{-}k)\!<\!1$, the steady-state error is bounded by $C\sup_t\|\boldsymbol{\eta}(t)\|/(1-L_T(1-k))$---independent of the deterministic dynamics' Lipschitz constant $L$ (\cref{ass:lipschitz_app}), in sharp contrast to the geometric blow-up of the deterministic case.
\end{theorem}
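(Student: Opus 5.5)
The plan is a triangle-inequality split of the one-step deviation into a carry-over term and a model-error term, bounding the first with Lemma~\ref{lem:softening_main} and the second with a Grönwall estimate on the PF-ODE residual. Let $\mathcal{T}_\theta^{c}$ be the trained transition: integrate \eqref{eq:pfode_main} with $\mathbf{v}_\theta$ from $t=0$ to $1-\epsilon$. Let $\mathcal{T}^{k,c}$ be the ideal PF-transition, which uses the exact velocity $\mathbf{v}^*$ of the $k$-noised bridge. The rollout satisfies $\mathbf{x}_{s+1}=\mathcal{T}_\theta^{c}(\mathbf{x}_s)$, and the ideal autoregressive output satisfies $\mathbf{x}^*_{s+1}=\mathcal{T}^{k,c}(\mathbf{x}^*_s)$. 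Adding and subtracting $\mathcal{T}^{k,c}(\mathbf{x}_s)$ gives
\[
\|\delta_{s+1}\|\le \|\mathcal{T}^{k,c}(\mathbf{x}_s)-\mathcal{T}^{k,c}(\mathbf{x}^*_s)\| + \|\mathcal{T}_\theta^{c}(\mathbf{x}_s)-\mathcal{T}^{k,c}(\mathbf{x}_s)\|.
\]
The first term is the carry-over and the second is the model-error term.

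\textbf{Carry-over term.} I will invoke Lemma~\ref{lem:softening_main} directly. On the rollout support, and for deviations lying in the noise-covered directions, the ideal map first contracts $\delta_s$ by $(1-k)$ at $t=0$. This holds because the map sees inputs of the form $(1-k)\mathbf{x}_0+k\mathbf{z}$, so the relevant input perturbation is $(1-k)\delta_s$. The flow then propagates the contracted deviation with Lipschitz constant $L_T$. Together these give $\|\mathcal{T}^{k,c}(\mathbf{x}_s)-\mathcal{T}^{k,c}(\mathbf{x}^*_s)\|\le L_T(1-k)\|\delta_s\|$. Taking expectation over the sampling and the rollout randomness produces the first term of \eqref{eq:flow_multistep_main}.

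\textbf{Model-error term.} Let $\mathbf{y}(t)$ solve the learned ODE and $\mathbf{x}^*(t)$ solve the ideal ODE, both from the same initial point, and set $\mathbf{d}(t)=\mathbf{y}(t)-\mathbf{x}^*(t)$. Write $\dot{\mathbf{d}}=[\mathbf{v}^*(\mathbf{y})-\mathbf{v}^*(\mathbf{x}^*)]+\mathbf{r}(\mathbf{y})$, then split $\mathbf{r}(\mathbf{y})=\mathbf{r}(\mathbf{x}^*)+[\mathbf{r}(\mathbf{y})-\mathbf{r}(\mathbf{x}^*)]$. The endpoint parametrization makes $\mathbf{v}$ carry a $1/(1-t)$ factor, so I will work with the rescaled quantity $(1-t)\dot{\mathbf{d}}$. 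On it the scaled-residual hypothesis applies: $\|(1-t)(\mathbf{r}(\mathbf{y})-\mathbf{r}(\mathbf{x}^*))\|\le L_r(t)\|\mathbf{d}\|$, and the forcing term is exactly $\boldsymbol{\eta}(t)$. The ideal-flow part contributes a rate that, after integration, is absorbed into the Grönwall constant. I would then appeal to Lemma~\ref{lem:rescale_app} for the resulting integral inequality. That yields $\sup_t\|\mathbf{d}(t)\|\le C\sup_{t\in[0,1)}\|\boldsymbol{\eta}(t)\|$, where $C$ depends on $\int L_r(t)/(1-t)\,dt$ over $[0,1-\epsilon]$; the clamp $1-t\ge 0.05$ keeps this integral finite.

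\textbf{Steady state.} Once the recursion $a_{s+1}\le\rho a_s+b$ holds with $\rho=L_T(1-k)<1$, unrolling gives $a_s\le\rho^s a_0+b\sum_j\rho^j$. Hence $\limsup_s a_s\le b/(1-\rho)$, and $L$ never enters the bound.

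The main obstacle is the carry-over step. Lemma~\ref{lem:softening_main} only controls the noise-direction component of $\delta_s$. I would have to argue either that rollout deviations lie in the noise-covered span, which is the paper's scoping condition (a), or else carry an orthogonal component with constant $L_T$. The statement assumes the first case, and I would state that assumption explicitly rather than prove it. A second subtlety is the singularity of the rescaled flow near $t=1$, which I would handle through the $\epsilon$ cutoff.
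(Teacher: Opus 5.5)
Your proposal follows the paper's proof: the same add-and-subtract of $\mathcal{T}^{c_s}(\mathbf{x}_s)$, the same geometric unroll, and the same two bounds. The carry-over term gets the $L_T(1-k)$ effective-Lipschitz bound, which the paper does not derive but imposes as Assumption~\ref{ass:idealmaplip_app}, so stating it as an explicit assumption is right; the model-error term gets the rescaled Gr\"onwall estimate of Lemma~\ref{lem:rescale_app}. The one thing to make explicit is that the ideal-flow contribution you fold into $C$ is controlled in the paper by the one-sided stability condition on $\mathbf{v}^*$ (Assumption~\ref{ass:idealstab_app}, the source of $m(t)$ in the exponent), not by the scaled-residual hypothesis alone.
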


$(1{-}k)$ emerges from the training-pair geometry and the trained map's clean-target output, not a hyperparameter the network reads: $\mathbf{g}_\theta$ is fed $(\mathbf{x}_t^{\,k},t,c)$ alone, and inference applies no noising. This connects to denoising--score matching~\citep{alain2014regularized,vincent2011connection,song2019generative}: regressing clean targets from noise-perturbed inputs implicitly learns $\nabla\!\log p(\mathbf{x})$, a manifold-correcting field whose radius is set by $k$. \cref{tab:k_sweep} confirms a monotone reduction in long-horizon error as training $k$ increases, the empirical face of the $L_T(1{-}k)$ scaling.

\noindent\textbf{Where the constants come from.} $L$ is the deterministic dynamics' Lipschitz on $\mathcal{X}$ (\cref{ass:lipschitz_app}); $L_T$ is the ideal PF-transition's intrinsic Lipschitz on the rollout support (\cref{ass:idealmaplip_app}); $L_r(t),m(t)$ enter through the Grönwall constant $C$ of \cref{lem:rescale_app}. Endpoint parameterization \eqref{eq:vtheta_app} makes the ideal field strictly contractive as $t\!\to\!1$ (\cref{ass:idealstab_app}), so $L_T$ is uniformly bounded; combined with the auto-diagnosed $(1{-}k)$ shrinkage, the input channel's compounding factor $L_T(1{-}k)$ shrinks toward $0$ as $k\!\to\!1$. Context drift, the second carry-over channel, is what flow forcing addresses.

\section{Experiment Setup}
\label{benchmark}

\subsection{General setup}

\paragraph{Dataset gathering.}

We do not introduce new simulations; instead, we standardize four existing public collections into a single benchmark: FNO-v~\citep{li2020fourier}, PDEArena~\citep{gupta2022multispatiotemporalscalegeneralizedpdemodeling}, PDEBench~\citep{takamoto2024pdebenchextensivebenchmarkscientific}, and The Well~\citep{ohana2025welllargescalecollectiondiverse}. The standardization spans (i) spatial resolution (downsampled or zero-padded to $128^2$), (ii) channel format (unified to 3 multiphysics channels, c3p128), (iii) numerical precision (float16), (iv) train/val/test splits (8:1:1), and (v) a shared evaluation protocol (1/5/10/20-step L2RE). The result is a 233\,GB corpus over 16 systems and 12 PDE families on which deterministic and generative solvers can be compared at matched parameter and training budgets. Per-source compression ratios and the c3p128 channel mapping are in Appendix~\ref{dataset} and \cref{tab:channel_map}.

\paragraph{Implementation.}
PhyVAE uses the SD-VAE backbone~\citep{rombach2022highresolutionimagesynthesislatent} to compress each state from c3p128 to c16p16 ($12\times$ compression; channel-count ablation in Table~\ref{tab:vae_ablation}). PFlowFT uses RMSNorm, SwiGLU, and FlashAttention~v2~\citep{dao2023flashattention2fasterattentionbetter} inside an SiT~\citep{ma2024sitexploringflowdiffusionbased} (Transformer) backbone. PhyVAE ($249$\,M) is trained for $100$\,k steps with KL weight $\beta\!=\!10^{-3}$; PFlowFT ($138$\,M) is then trained for another $100$\,k steps on cached latent trajectories. Full hyperparameters appear in Appendix~\ref{apx:imp:lgs}.
\paragraph{Evaluation.}
We report relative $\ell_2$ error (L2RE) at 1-, 5-, 10-, and 20-step rollouts; per-system L2RE is sample-averaged on the held-out test split, and table averages are uniform over the 16 systems with subscripts denoting cross-system (not run-to-run) standard deviation. ``Active dynamics parameters'' refers to parameters used by the recurrent dynamics model at each rollout step; LGS additionally uses a frozen 249\,M-parameter PhyVAE. \cref{fig:flops} reports recurrent dynamics-step FLOPs only; the PhyVAE encoder ($\sim$70 GFLOPs/frame) is incurred once for the initial history and the decoder ($\sim$113 GFLOPs/frame) only when fields are materialized for downstream use---both amortize away over a long rollout, since the in-loop cost is the per-step dynamics call.

\subsection{Baselines}

We compare against three deterministic baselines: U-AFNO~\citep{Bonneville_2025} (FNO family), CNextUNet (a ConvNeXt-block U-Net~\citep{liu2022convnet,ronneberger2015unetconvolutionalnetworksbiomedical}, U-Net family), and DPOT~\citep{hao2024dpot} (Transformer family), each at $\sim$150\,M active parameters to match LGS. To avoid the training-protocol mismatch reported across prior benchmarks, we re-train every baseline ourselves under a single configuration: the same 4-frame history window (\cref{methods}, channel-concatenated for U-Net/FNO baselines), 200\,k optimizer steps, identical schedule. To rule out parameter-count effects, we additionally scale U-AFNO and DPOT to $\sim$400\,M active parameters---matching the combined PhyVAE\,+\,LGS count---and report results in Appendix~\ref{apx:imp:baseline} (\cref{tab:scaled_baselines}); we also adapt two recent generative PDE methods, DiffusionPDE~\citep{huang2024diffusionpdegenerativepdesolvingpartial} and FunDiff~\citep{wang2025fundiffdiffusionmodelsfunction}, to our autoregressive protocol (\cref{tab:generative_baselines}). Recurrent-stabilizer operators~\citep{lippe2023pderefinerachievingaccuratelong,ye2025recurrentneuraloperatorsstable} and multi-scale probabilistic operators~\citep{hu2025waveletdiffusionneuraloperator} target rollout stability via complementary mechanisms (iterative refinement, multi-resolution flow) and require their own protocol matching; we mark these as the most natural points of comparison to add in future work.

\begin{table}[ht]
\centering
\small
\caption{\textbf{Headline result.} Multi-horizon average L2RE ($\downarrow$, \%) over 16 PDE systems at matched $\sim$150\,M active parameters; subscripts give cross-system standard deviation. \textsc{Wins/16}: per-horizon count of systems on which a model attains the lowest L2RE. Bold marks the column-best mean.}
\label{tab:lgs_results}
\begin{tabular}{lcccc|cc}
\toprule
\textsc{Model} & 1-step & 5-step & 10-step & 20-step
& \multicolumn{2}{c}{\textsc{Wins / 16}} \\
& & & & & 5-step & 10-step \\
\midrule
CNextUNet (153\,M) & $20.8_{\pm 22.0}$ & $42.9_{\pm 20.1}$ & $58.6_{\pm 22.3}$ & $74.2_{\pm 27.9}$ & 0 & 0 \\
DPOT (160\,M)       &  $5.6_{\pm 4.6}$ & $16.9_{\pm 9.0}$ & $33.8_{\pm 15.8}$ & $60.9_{\pm 32.6}$ & 1 & 1 \\
U-AFNO (152\,M)     & $\mathbf{5.5}_{\pm 3.7}$ & $17.0_{\pm 9.2}$ & $29.1_{\pm 15.7}$ & $56.1_{\pm 18.1}$ & 0 & 0 \\
LGS (138\,M dyn.\,+\,249\,M VAE) &  $6.0_{\pm 3.7}$ & $\mathbf{12.1}_{\pm 6.4}$ & $\mathbf{18.3}_{\pm 10.1}$ & $\mathbf{30.2}_{\pm 18.4}$ & \textbf{15} & \textbf{15} \\
\bottomrule
\end{tabular}
\end{table}

\subsection{Ablation Studies}
\label{sec:ablation}

We ablate the five ingredients of PFlowFT---(a) generative transition $p(\mathbf{x}_{s+1}\mid\mathbf{x}_s,c_s)$, (b) flow-forcing supervision over every step in $\mathbf{x}_{1:s+1}$, (c) noise level $k$ (Eq.~\eqref{eq:softsource_main}), (d) context maintenance and update $c_{s+1}\!=\!\mathrm{Update}(c_s,\hat{\mathbf{x}}_{s+1})$ (Eq.~\eqref{eq:condupdate}), and (e) temporal pyramids---to identify which drive (i) long-horizon stability and (ii) efficiency. All variants share the frozen PhyVAE and a context length of 4. We report two ablation sweeps: an \emph{aggregated} chain A1$\to$A4 that removes (e)$\to$(a) cumulatively, and a \emph{single-component} sweep \{A2S, A3S, A4S\} that removes exactly one ingredient each (Table~\ref{tab:ablation_single}). Implementation in Appendix~\ref{apx:imp:ablation}.

\FloatBarrier

\section{Experiment Results}
\label{experiments}

\begin{figure}[t]
    \centering
    \includegraphics[width=\linewidth]{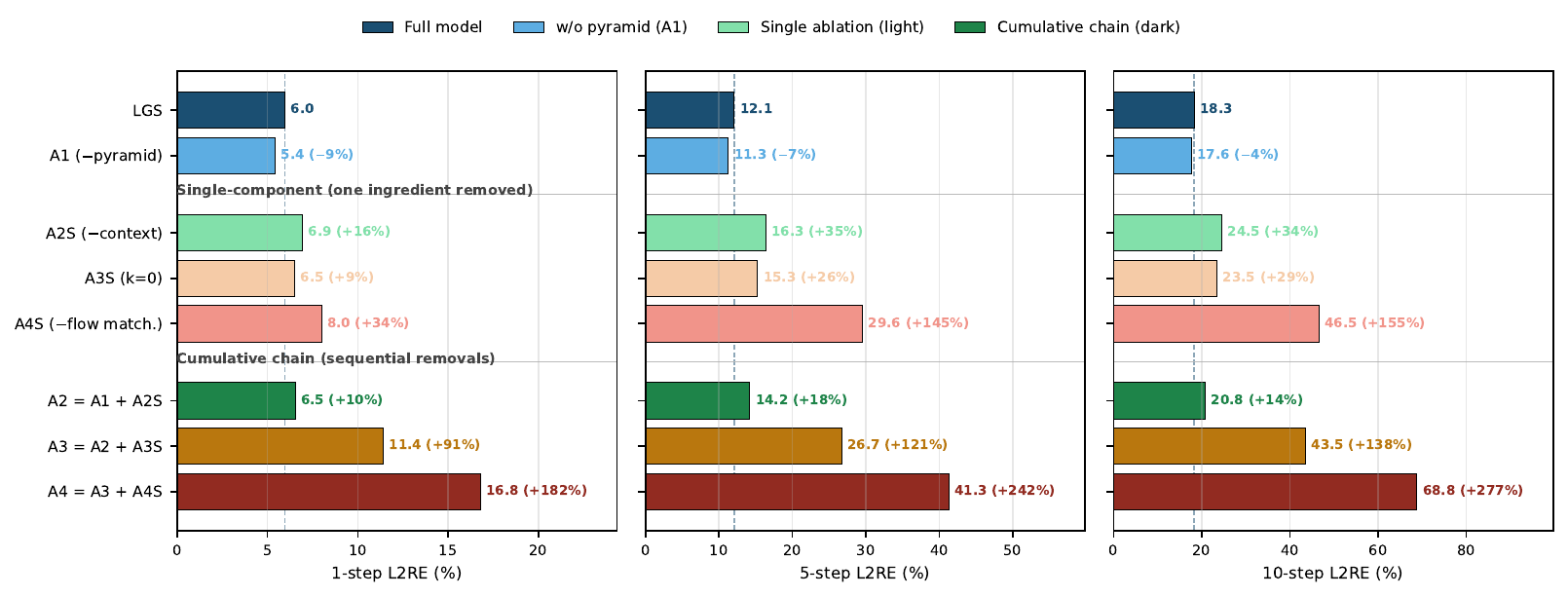}
    \caption{\textbf{Single-component vs.\ cumulative ablation} (1/5/10-step L2RE, $\downarrow$). Light bars: one ingredient removed at a time (A1 no pyramid; A2S no context; A3S $k{=}0$; A4S no flow matching). Dark bars: cumulative chain A2$=$A1$+$A2S, A3$=$A2$+$A3S, A4$=$A3$+$A4S. Percentages are relative increases over full LGS at each horizon.}
    \label{fig:ablation}
\end{figure}

\subsection{Ablation Analysis}\label{sec:exp_ablation}

\emph{Flow matching is the load-bearing component; $k$ and context are stability scaffolds; the pyramid is an efficiency dial.} \cref{fig:ablation} sweeps single-component removals (light) against the cumulative chain (dark) across three horizons. Removing flow matching alone (A4S) inflates 5/10-step L2RE by $+145$/$+155\%$, $\sim\!5\times$ beyond $k$ ($+26$/$+29\%$) or context ($+35$/$+34\%$)---the empirical face of \cref{thm:multistep_main}, since the $(1{-}k)$ contraction exists only because the transition is a probability flow. \emph{$k$ and context are stability scaffolds, not accuracy drivers}: each costs $\le\!16\%$ at 1-step but $+29$/$+34\%$ at 10-step, so a 1-step protocol would underrate both. \emph{The chain is superadditive}: A4 reaches $+277\%$ vs.\ $+218\%$ summed---the three generative pieces are coupled, not independent. The temporal pyramid (LGS$\to$A1) is the only ingredient orthogonal to accuracy: it trades $\le\!9\%$ slack for the order-of-magnitude throughput in \cref{fig:flops}, the design choice that keeps the pipeline competitive at scale. Per-system breakdowns of every ablation are in Appendix~\cref{tab:ablation}.

The aggregated chain reads as a stripped-down model-selection schedule: each step removes one ingredient and asks whether the remainder can compensate. The answer at every horizon is no---once flow matching is gone, neither $k$ nor context restores a stable rollout, even with the pyramid still active. We read this as evidence that the contraction mechanism is load-bearing rather than supplementary, with $k$ and context tuning the contraction's neighborhood and conditioning rather than substituting for it. The asymmetry between A2S/A3S (light, $\le\!35\%$) and A4S (light, $+145$\%) says the same thing in a different language: removing the noise level or the context degrades the \emph{quality} of the manifold-correction step, while removing flow matching deletes the step itself.

\subsection{Long-term Rollout Stability}\label{sec:exp_long}

\begin{figure}[t]
\centering
\begin{minipage}[c]{0.50\linewidth}
\centering
\includegraphics[width=\linewidth]{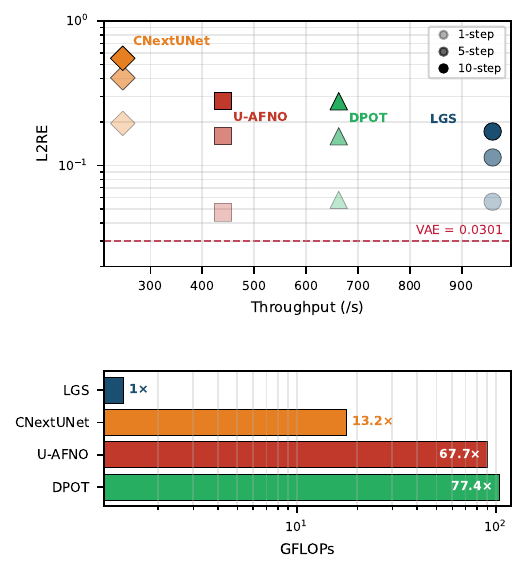}
\end{minipage}\hfill
\begin{minipage}[c]{0.46\linewidth}
\caption{\textbf{Accuracy, throughput, compute.} \emph{Top:} L2RE vs.\ throughput (H200, batch 64); marker shade encodes rollout horizon (1/5/10-step); dashed line is the PhyVAE reconstruction floor at $0.0301$. \emph{Bottom:} per-sample recurrent dynamics-step GFLOPs relative to LGS; deterministic baselines run at $13$--$77\times$ LGS's compute.}
\label{fig:flops}
\end{minipage}
\end{figure}
\emph{Long-horizon stability is where the framework earns its keep.} \cref{tab:lgs_results} reports the headline: at one step LGS matches the strongest deterministic baseline; from 5 steps onward it carries the lead, and the gap widens with horizon ($30.2\%$ vs.\ U-AFNO's $56.1\%$ at 20 steps; lowest L2RE on 15/16 systems at both 5- and 10-step, and best on every system with a 20-step trajectory, Appendix~\cref{tab:rollout}). \cref{fig:flops} places this against per-step compute---latent-space models sit at the right edge of the throughput axis while consuming $13$--$77\times$ less recurrent dynamics-step compute than the deterministic baselines (frozen-PhyVAE encode/decode is amortized; see \cref{benchmark}). CNextUNet's $20.8\%$ even at one step is a structural ceiling, not a scaling one: U-Net inductive biases (local convolutional features, fixed receptive field) cannot share spectral structure across 12 PDE families simultaneously, where the global mixing of U-AFNO and DPOT does. LGS sidesteps the choice altogether by pushing per-system structure into the shared latent encoder.

The advantage compounds with horizon. Where U-AFNO and DPOT roughly triple their L2RE between 5 and 20 steps, LGS grows only $\sim\!2.5\times$ its own---consistent with \cref{thm:multistep_main}, which predicts that a deterministic operator's residual amplifies geometrically while LGS's residual amortizes against the $(1{-}k)$ contraction. \cref{fig:steps_show}a makes this concrete: a representative FNO-v3 ten-step rollout where LGS preserves coherent vortex structure while U-AFNO accumulates visible artifacts. The gap is not a parameter-count effect: doubling baseline parameters to $\sim\!400$\,M (matching PhyVAE\,+\,LGS combined) does not close it (Appendix~\cref{tab:scaled_baselines,tab:scaled}). Per-system long-horizon numbers (Appendix~\cref{tab:rollout}) show the same pattern across every PDE family. LGS is also more robust to corrupted inputs: at SNR\,$=\!10$\,dB observation noise its 10-step error increases by $+0.5\%$ vs.\ U-AFNO's $+4\%$ (\cref{tab:noise_robustness}); at $50\%$ spatial masking, $30.9\%$ vs.\ $57.3\%$ (\cref{tab:mask_robustness}). Per-system rollout visualizations for every PDE family appear in Appendix~\cref{fig:fno,fig:pa,fig:pb,fig:well,fig:pans}.

\begin{figure}[t]
\centering
\includegraphics[width=\linewidth]{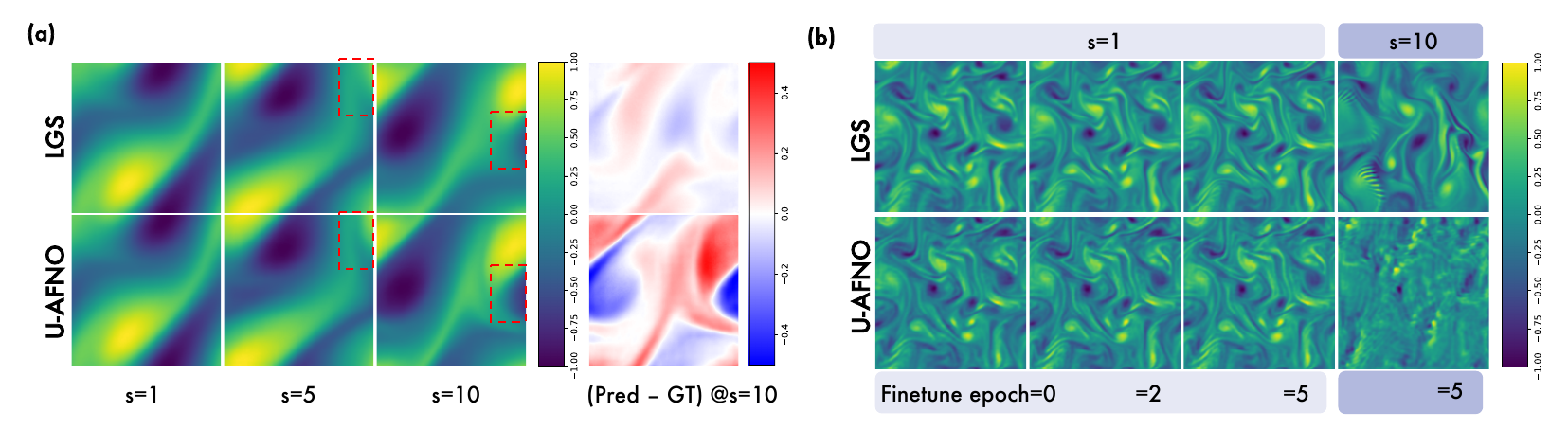}
\caption{(a) FNO-v3 ten-step rollout at $s{=}1,5,10$ with residual at $s{=}10$: U-AFNO accumulates artifacts (red boxes), LGS preserves vortex structure. (b) OOD adaptation on $256^2$ Kolmogorov flow: $s{=}1$ rollouts at finetune epochs $0,2,5$ (left three columns), and the converged $s{=}10$ snapshot at epoch $5$ (right). LGS stays stable; U-AFNO blurs and drifts.}
\label{fig:steps_show}
\end{figure}

\subsection{Generalization to High-Resolution OOD Dynamics}

\begin{wraptable}{r}{0.50\linewidth}
\vspace{-1.0em}
\centering
\small
\caption{\textbf{Adaptation to $256^2$ Kolmogorov flow} ($10\%$ target data, 1-step L2RE, $\downarrow$).}
\label{tab:adapt}
\begin{tabular}{lccc}
\toprule
Model & Zero-shot & Epoch 2 & Epoch 5 \\
\midrule
U-AFNO & 0.6528 & 0.5789 & 0.3430 \\
LGS & \textbf{0.3984} & \textbf{0.1781} & \textbf{0.1291} \\
\bottomrule
\end{tabular}
\vspace{-0.5em}
\end{wraptable}
\emph{The shared latent encoder makes adaptation cheap.} Both U-AFNO and LGS accept variable-resolution inputs, yet adapt at different rates. We finetune each on a $256^2$ Kolmogorov flow---an out-of-distribution target held out from the 12-family pretraining corpus and at $4\times$ the pretraining spatial extent---with $10\%$ target data for five epochs (\cref{tab:adapt}); LGS converges roughly twice as fast, dropping 1-step error from $0.398$ to $\mathbf{0.129}$ against U-AFNO's $0.653$ to $0.343$ ($\sim\!2.7\times$ lower at epoch 5). Under self-rollout (\cref{fig:steps_show}b), U-AFNO blurs and drifts---the compounding signature of deterministic mappings---while LGS produces a stable, damped Kolmogorov flow that improves monotonically with finetuning.

Two factors drive the cheaper transfer: the encoder absorbs per-resolution and per-channel structure during pretraining (so finetuning only updates the latent dynamics), and the input-noised flow already tolerates mild distribution shift (\cref{thm:multistep_main}) so the zero-shot rollout lands in an acceptable basin---finetuning sharpens an already-stable predictor rather than fighting compounding drift.

\section{Discussion}
\label{sec:discussion}

LGS recasts long-horizon PDE rollout as a manifold-correction problem: input-noised latent flow-matching turns geometric error growth into per-step contraction (\cref{thm:multistep_main}), empirically validated across heterogeneous PDE families at $13$--$77\times$ less recurrent dynamics-step compute. This matters where surrogates are amortized over many rollouts---digital twins, active control, data assimilation---because deterministic compounding becomes the dominant failure mode at scale, and replacing geometric blow-up with damped recursion is the precondition for any of these applications.

The shared latent geometry also makes the model portable: finetuning only the latent dynamics is cheaper than re-fitting a global field operator (\cref{tab:adapt}), so the same trained model can be specialized to a new resolution, geometry, or constitutive law under tight budgets. Methodologically, training-time noising connects autoregressive rollout to denoising--score matching---the trained vector field approximates $\nabla\!\log p(\mathbf{x})$ on the rollout support, of which the long-horizon contraction is the manifold-correcting consequence.

\paragraph{Limitations.}\label{sec:limitations} \emph{(i)} 2D only; 3D requires a volumetric PhyVAE. \emph{(ii)} The frozen PhyVAE imposes a $\sim\!3\%$ reconstruction floor (\cref{tab:vae_ablation}) and may discard rare channels. \emph{(iii)} Isotropic Gaussian noising may underperform at multimodal bifurcation points.

\begin{ack}
This work was supported by the MIT Generative AI Impact Consortium (MGAIC). Computational resources were provided by the National Laboratory of the Rockies (NLR; formerly NREL) Kestrel system and by the MIT Office of Research Computing and Data (ORCD). The authors thank the maintainers of the FNO-v, PDEArena, PDEBench, and The Well datasets, whose open releases made the unified benchmark possible.
\end{ack}

\bibliographystyle{plainnat}
\bibliography{references}

\newpage
\appendix

\section{Dynamics Condition Clustering}
\label{apx:conditions}

LGS infers a per-trajectory physics context $c$ from history. \cref{fig:conditions} projects sampled contexts via t-SNE: each system forms a tight cluster, and the per-system drift across the flow's transport time is small. Both observations confirm that the context update (\cref{eq:condupdate}) keeps $c$ on the learned manifold rather than drifting under autoregressive prediction.

\begin{figure}[h]
    \centering
    \includegraphics[width=0.85\linewidth]{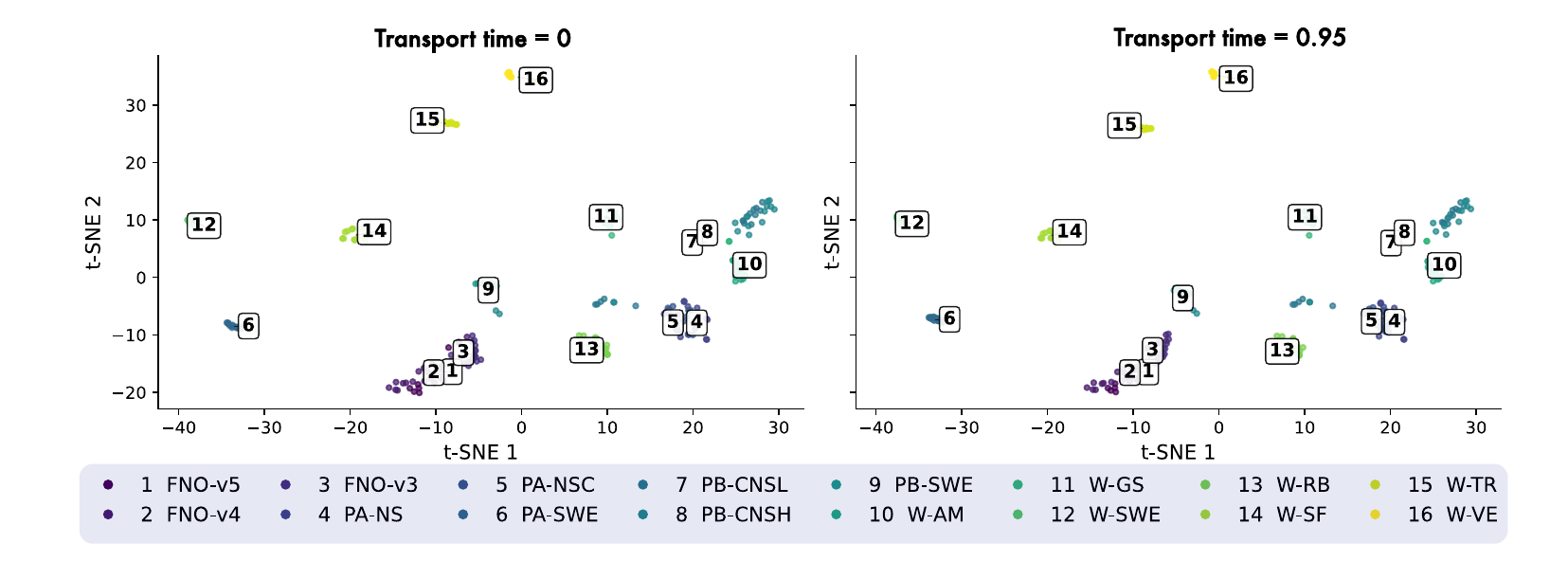}
    \caption{t-SNE projection of inferred physics contexts $c$ across PDE systems, taken at the start ($t{=}0$) and end ($t{=}1{-}\epsilon$) of the flow-matching transport, sampled at 16 initial conditions. Each system's context forms a stable cluster, and per-system drift across transport steps is small, confirming stable system identification.}
    \label{fig:conditions}
\end{figure}

\section{Numerical analysis on error accumulation}
\label{erracc}
\setcounter{equation}{0}
\renewcommand{\theequation}{A\arabic{equation}}

\subsection{Deterministic neural operators: geometric compounding}
\label{erracc1}

\paragraph{Setup.}
Let $\Phi:\mathcal{X}\to\mathcal{X}$ be the true one-step latent dynamics and
$f_\theta:\mathcal{X}\to\mathcal{X}$ a deterministic learned operator. Rollouts satisfy
\begin{equation}
\label{eq:det_rollout_app}
\mathbf{x}_{s+1}^*=\Phi(\mathbf{x}_s^*),\qquad
\mathbf{x}_{s+1}=f_\theta(\mathbf{x}_s).
\end{equation}
Define $\delta_s\coloneq \mathbf{x}_s-\mathbf{x}_s^*$ and
$e_s\coloneq f_\theta(\mathbf{x}_s)-\Phi(\mathbf{x}_s)$.

\begin{assumption}[Lipschitz true dynamics]
\label{ass:lipschitz_app}
$\Phi$ is $L$-Lipschitz: $\|\Phi(\mathbf{x})-\Phi(\mathbf{y})\|\le L\|\mathbf{x}-\mathbf{y}\|$.
\end{assumption}

\begin{lemma}[Deterministic error recursion]
\label{lem:det_rec_app}
Under Assumption~\ref{ass:lipschitz_app},
\begin{equation}
\label{eq:det_rec_app}
\|\delta_{s+1}\|\le \|e_s\|+L\|\delta_s\|.
\end{equation}
\end{lemma}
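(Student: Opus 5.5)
The plan is to prove \eqref{eq:det_rec_app} directly from the two rollouts in \eqref{eq:det_rollout_app}. First write the next-step deviation as
\[
\delta_{s+1}=\mathbf{x}_{s+1}-\mathbf{x}_{s+1}^*=f_\theta(\mathbf{x}_s)-\Phi(\mathbf{x}_s^*).
\]
Then add and subtract the true dynamics evaluated at the \emph{rollout} state, $\Phi(\mathbf{x}_s)$. This splits the deviation into two pieces:
\[
\delta_{s+1}=\bigl(f_\theta(\mathbf{x}_s)-\Phi(\mathbf{x}_s)\bigr)+\bigl(\Phi(\mathbf{x}_s)-\Phi(\mathbf{x}_s^*)\bigr)=e_s+\bigl(\Phi(\mathbf{x}_s)-\Phi(\mathbf{x}_s^*)\bigr).
\]
The first piece is exactly the one-step error $e_s$ as defined in the setup, namely the operator's error at the point it actually visits. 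The second piece is the true dynamics' response to the accumulated deviation.

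Next, apply the triangle inequality to obtain $\|\delta_{s+1}\|\le\|e_s\|+\|\Phi(\mathbf{x}_s)-\Phi(\mathbf{x}_s^*)\|$. Bound the second term with Assumption~\ref{ass:lipschitz_app}, taking $\mathbf{x}=\mathbf{x}_s$ and $\mathbf{y}=\mathbf{x}_s^*$. This gives $L\|\mathbf{x}_s-\mathbf{x}_s^*\|=L\|\delta_s\|$, and hence the claim.

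There is no real obstacle here. The only choice that matters is which intermediate point to insert, and it is dictated by the definition of $e_s$. That definition evaluates the error at the rollout state $\mathbf{x}_s$, not at $\mathbf{x}_s^*$, so the Lipschitz constant must be applied to $\Phi$ rather than to $f_\theta$. Inserting $f_\theta(\mathbf{x}_s^*)$ instead would require a Lipschitz bound on the learned operator, which the paper does not assume.

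As a follow-up, the unrolled bound $\|\delta_s\|\le\sum_{j<s}L^{s-1-j}\|e_j\|$ quoted in the main text follows by induction on $s$. The base case uses $\delta_0=0$, and the inductive step substitutes the hypothesis into \eqref{eq:det_rec_app}.
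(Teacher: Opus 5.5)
Your proof is correct and matches the paper's argument: add and subtract $\Phi(\mathbf{x}_s)$ to get $\delta_{s+1}=e_s+\bigl(\Phi(\mathbf{x}_s)-\Phi(\mathbf{x}_s^*)\bigr)$, then apply the triangle inequality and the $L$-Lipschitz bound on $\Phi$. Your remark that the intermediate point must be $\Phi(\mathbf{x}_s)$ rather than $f_\theta(\mathbf{x}_s^*)$ is also right, since no Lipschitz assumption is placed on $f_\theta$.
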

\begin{proof}
Add and subtract $\Phi(\mathbf{x}_s)$:
\[
\delta_{s+1}
=f_\theta(\mathbf{x}_s)-\Phi(\mathbf{x}_s^*)
=e_s+\big(\Phi(\mathbf{x}_s)-\Phi(\mathbf{x}_s^*)\big),
\]
then apply the triangle inequality and Lipschitzness of $\Phi$.
\end{proof}

\begin{proposition}[Deterministic compounding bound]
\label{prop:det_bound_app}
If $\delta_0=0$, then
\begin{equation}
\label{eq:det_bound_app}
\|\delta_s\|\le \sum_{j=0}^{s-1}L^{\,s-1-j}\|e_j\|.
\end{equation}
\end{proposition}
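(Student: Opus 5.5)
The bound follows by induction on $s$, unrolling the one-step recursion of Lemma~\ref{lem:det_rec_app}; no further assumptions are needed beyond Assumption~\ref{ass:lipschitz_app}, which that lemma already uses. The claim to carry is
\[
P(s):\quad \|\delta_s\|\le \sum_{j=0}^{s-1}L^{\,s-1-j}\|e_j\|,
\]
with the empty-sum convention for $s=0$.

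\emph{Base case.} For $s=0$ the right-hand side is an empty sum, equal to $0$, and $\|\delta_0\|=0$ by hypothesis, so $P(0)$ holds. As a sanity check, Lemma~\ref{lem:det_rec_app} with $\delta_0=0$ gives $\|\delta_1\|\le\|e_0\|$, which agrees with $P(1)$.

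\emph{Inductive step.} Assume $P(s)$. Apply Lemma~\ref{lem:det_rec_app} to get $\|\delta_{s+1}\|\le\|e_s\|+L\|\delta_s\|$. Substitute the inductive hypothesis; this is legitimate because $L\ge 0$, so multiplying an inequality by $L$ preserves its direction. This gives
\[
\|\delta_{s+1}\|\le \|e_s\|+\sum_{j=0}^{s-1}L^{\,s-j}\|e_j\|.
\]
Finally, note that $\|e_s\|=L^{\,s-s}\|e_s\|$ is exactly the $j=s$ term of $\sum_{j=0}^{s}L^{\,(s+1)-1-j}\|e_j\|$, so the right-hand side equals that sum. This is $P(s+1)$.

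There is no genuine obstacle here. The only points needing care are the index shift $L\cdot L^{s-1-j}=L^{(s+1)-1-j}$, the nonnegativity of $L$ (a Lipschitz constant) used in the substitution, and the empty-sum convention at $s=0$. An equivalent route is to prove the slightly more general statement $\|\delta_s\|\le L^s\|\delta_0\|+\sum_{j<s}L^{s-1-j}\|e_j\|$ by the same induction and then set $\delta_0=0$.
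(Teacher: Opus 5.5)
Your proposal is correct and is the same argument as the paper's, whose proof just says to unroll the recursion of Lemma~\ref{lem:det_rec_app} by induction; your base case $\delta_0=0$, the index shift $L\cdot L^{s-1-j}=L^{s-j}$, and the identification of $\|e_s\|$ as the $j=s$ term are all handled correctly. Your note that $L\ge 0$ is needed to substitute the inductive hypothesis is a detail the paper leaves implicit.
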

\begin{proof}
Unroll the recursion \eqref{eq:det_rec_app} by induction.
\end{proof}

\paragraph{Interpretation.}
When $L>1$, one-step errors are amplified geometrically due to the intrinsic expansiveness of the true dynamics. This amplification is inherited by any deterministic autoregressive rollout.

\subsection{Multi-step stability under input noising}
\label{erracc2}

\paragraph{Setup}
We model $p(\mathbf{x}_1\mid \mathbf{x}_0,c)$ by a conditional probability flow trained on the
$k$-noised input \eqref{eq:softsource_main}:
\begin{equation}
\label{eq:softsource_app}
\tilde{\mathbf{x}}_0^{\,k}=(1-k)\mathbf{x}_0+k\mathbf{z},\qquad \mathbf{z}\sim\mathcal{N}(0,\mathbf{I}).
\end{equation}
Bridge states are $\mathbf{x}_t^{\,k}=(1-t)\tilde{\mathbf{x}}_0^{\,k}+t\mathbf{x}_1$ and the sample-wise velocity is
$\mathbf{u}_t^{\,k}=(\mathbf{x}_1-\mathbf{x}_t^{\,k})/(1-t)$.
The model predicts $\hat{\mathbf{x}}_1=\mathbf{g}_\theta(\mathbf{x}_t^{\,k},t,c)$ and induces the velocity
\begin{equation}
\label{eq:vtheta_app}
\mathbf{v}_\theta(\mathbf{x},t,c)\coloneq\frac{\mathbf{g}_\theta(\mathbf{x},t,c)-\mathbf{x}}{1-t}.
\end{equation}
At inference, the PF-ODE is integrated \emph{from the rollout output $\mathbf{x}_0$ directly, with no further noising}:
\begin{equation}
\label{eq:pfode_app}
\dot{\mathbf{x}}=\mathbf{v}_\theta(\mathbf{x},t,c),\qquad \mathbf{x}(0)=\mathbf{x}_0,
\end{equation}
from $t=0$ to $t=1-\epsilon$ and $\hat{\mathbf{x}}_{1-\epsilon}$ is taken as the predicted next latent. The training-time noising at level $k$ shapes the trained map's auto-diagnosis behavior: it has been trained on a $k$-thickened input distribution, and at inference it implicitly treats rollout deviations as residual noise of the same form (see Lemma~\ref{lem:soft_contract_app}).

\paragraph{Ideal transport field.}
Let $\mathbf{v}^*(\mathbf{x},t,c)$ denote the (unknown) ideal velocity field corresponding to the true conditional transition.
We measure model error via the residual
\begin{equation}
\label{eq:resid_def_app}
\mathbf{r}(\mathbf{x},t,c)\coloneq \mathbf{v}_\theta(\mathbf{x},t,c)-\mathbf{v}^*(\mathbf{x},t,c).
\end{equation}

\begin{assumption}[Scaled Lipschitz regularity on the rollout support]
\label{ass:scaledlip_app}
Fix $c$. On the set of states visited by PF-ODE trajectories initialized from $\tilde{\pi}_0^k$ (the distribution induced by \eqref{eq:softsource_app}), the scaled residual $(1-t)\mathbf{r}(\cdot,t,c)$ is Lipschitz in $\mathbf{x}$ with constant $L_r(t)$:
\begin{equation}
\label{eq:scaledlip_app}
\|(1-t)(\mathbf{r}(\mathbf{x},t,c)-\mathbf{r}(\mathbf{y},t,c))\|\le L_r(t)\|\mathbf{x}-\mathbf{y}\|.
\end{equation}
\end{assumption}

\begin{assumption}[Ideal field stabilization near $t\to1$]
\label{ass:idealstab_app}
Fix $c$. The ideal velocity satisfies a one-sided stability bound:
there exists $m(t)\ge 0$ such that for all $\mathbf{x},\mathbf{y}$ on the rollout support,
\begin{equation}
\label{eq:idealstab_app}
\langle \mathbf{x}-\mathbf{y},\,\mathbf{v}^*(\mathbf{x},t,c)-\mathbf{v}^*(\mathbf{y},t,c)\rangle
\le -\frac{1-m(t)}{1-t}\|\mathbf{x}-\mathbf{y}\|^2.
\end{equation}
\end{assumption}

\paragraph{Remarks on assumptions.}
Assumption~\ref{ass:idealstab_app} is the formal expression of the fact that the probability-flow bridge is \emph{at most weakly expansive} and typically \emph{contractive} as $t\to1$ due to the endpoint parameterization \eqref{eq:vtheta_app} (the $1/(1-t)$ factor penalizes deviations near the endpoint).
Assumption~\ref{ass:scaledlip_app} is where input noising enters: training with $k>0$ exposes the model to \com{perturbed} inputs drawn from $\tilde{\pi}_0^k$, making Lipschitz control on the rollout support plausible; without $k$, such control is not implied by the data distribution.

\subsubsection{A one-step deviation bound (pathwise)}
\label{erracc2_onestep}

Fix $c$ and $k$. Let $\mathbf{x}(t)$ and $\mathbf{x}^*(t)$ be PF-ODE solutions driven by $\mathbf{v}_\theta$ and $\mathbf{v}^*$ with the same initial condition $\mathbf{x}(0)=\tilde{\mathbf{x}}_0^{\,k}$.
Define $\Delta(t)\coloneq \mathbf{x}(t)-\mathbf{x}^*(t)$.

\begin{lemma}[Rescaled Gronwall bound]
\label{lem:rescale_app}
Under Assumptions~\ref{ass:scaledlip_app}--\ref{ass:idealstab_app}, for any $\epsilon\in(0,1)$,
\begin{equation}
\label{eq:onestep_app}
\|\Delta(1-\epsilon)\|
\le
\epsilon \exp\!\left(\int_0^{1-\epsilon}\frac{L_r(t)+m(t)}{1-t}dt\right)
\int_0^{1-\epsilon}\frac{\|(1-t)\mathbf{r}(\mathbf{x}^*(t),t,c)\|}{\epsilon(1-t)}dt.
\end{equation}
\end{lemma}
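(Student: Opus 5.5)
Since $\mathbf{x}(0)=\mathbf{x}^*(0)$ we have $\Delta(0)=0$. The plan is to derive a differential inequality for $\|\Delta(t)\|$, integrate it with an integrating factor, and read off the bound at $t=1-\epsilon$.

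\textbf{Step 1: split the derivative.} Differentiating gives
\[
\dot\Delta=\mathbf{v}_\theta(\mathbf{x},t,c)-\mathbf{v}^*(\mathbf{x}^*,t,c)
=\big[\mathbf{v}^*(\mathbf{x},t,c)-\mathbf{v}^*(\mathbf{x}^*,t,c)\big]+\big[\mathbf{r}(\mathbf{x},t,c)-\mathbf{r}(\mathbf{x}^*,t,c)\big]+\mathbf{r}(\mathbf{x}^*,t,c).
\]

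\textbf{Step 2: bound each term.} Take the inner product with $\Delta$ and use $\tfrac{d}{dt}\tfrac12\|\Delta\|^2=\langle\Delta,\dot\Delta\rangle$.
\begin{itemize}
\item For the first bracket, Assumption~\ref{ass:idealstab_app} gives at most $-\frac{1-m(t)}{1-t}\|\Delta\|^2$.
\item For the second bracket, Cauchy--Schwarz and Assumption~\ref{ass:scaledlip_app} give at most $\frac{L_r(t)}{1-t}\|\Delta\|^2$.
\item For the forcing term, Cauchy--Schwarz gives at most $\|\Delta\|\,\frac{\|(1-t)\mathbf{r}(\mathbf{x}^*(t),t,c)\|}{1-t}$.
\end{itemize}
Dividing by $\|\Delta\|$ yields, in the upper Dini-derivative sense (to handle points where $\Delta=0$, either work with $\sqrt{\|\Delta\|^2+\varepsilon'}$ or use a Dini derivative),
\[
\frac{d}{dt}\|\Delta\|\le \frac{L_r(t)+m(t)-1}{1-t}\|\Delta\|+\frac{\|\boldsymbol{\eta}(t)\|}{1-t},
\qquad \boldsymbol{\eta}(t)=(1-t)\mathbf{r}(\mathbf{x}^*(t),t,c).
\]

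\textbf{Step 3: integrate.} Write $a(t)=\frac{L_r+m}{1-t}$. The extra $-\frac{1}{1-t}$ term has integrating factor $\exp(\int_0^t\frac{ds}{1-s})=\frac{1}{1-t}$. So Grönwall's inequality with $\Delta(0)=0$ gives
\[
\|\Delta(T)\|\le \int_0^T \exp\!\Big(\int_\tau^T a\Big)\frac{1-T}{1-\tau}\cdot\frac{\|\boldsymbol{\eta}(\tau)\|}{1-\tau}\,d\tau .
\]
Now set $T=1-\epsilon$, so $1-T=\epsilon$. Since $L_r,m\ge 0$, we have $\int_\tau^T a\le\int_0^{1-\epsilon}a$. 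The remaining factor $\frac{1}{1-\tau}\le\frac1\epsilon$ is absorbed by writing the bound as $\epsilon\cdot\frac{\|\boldsymbol{\eta}\|}{\epsilon(1-\tau)}$, which reproduces exactly the form of \eqref{eq:onestep_app}. The bound as stated is even looser than what the argument gives: dropping the contraction factor entirely and using $\frac{1-T}{1-\tau}\le 1$ already suffices. I therefore expect no real obstruction in the integration step; if the bookkeeping gets tight I would simply discard the $-1$ part of the rate.

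\textbf{Main obstacle.} The hard part is not the calculus but validity. Both assumptions are stated only on the rollout support, so I must check that the whole segment between $\mathbf{x}(t)$ and $\mathbf{x}^*(t)$, or at least the pair itself, stays in that set. I would handle this by taking it as part of the hypothesis: the trajectories start from $\tilde{\pi}_0^k$ and remain on the support for all $t\le 1-\epsilon$. The other point to check is integrability near $t\to1$, which is avoided by stopping at $1-\epsilon$; this is exactly why the $\epsilon$ appears explicitly in the bound.
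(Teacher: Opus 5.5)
Your proof is correct and is the paper's argument in different notation. The paper sets $y(t)=\Delta(t)/(1-t)$; the extra $+\|y\|^2/(1-t)$ term this produces cancels the ``$1$'' in $1-m(t)$, exactly as your integrating factor $\frac{1-T}{1-\tau}$ does. Both routes reach $\epsilon\,e^{\int_0^{1-\epsilon}a(t)\,dt}\int_0^{1-\epsilon}\|\boldsymbol{\eta}(t)\|/(1-t)^2\,dt$, and then use $1-t\ge\epsilon$, a step the paper calls a ``rewriting'' but which is really this inequality. Your side remark is also right: the $\epsilon$'s in the stated bound cancel, so it equals $e^{\int_0^{1-\epsilon}a(t)\,dt}\int_0^{1-\epsilon}\|\mathbf{r}(\mathbf{x}^*(t),t,c)\|\,dt$, which already follows from plain Gr\"onwall on $\|\Delta\|$ with the contraction term discarded.
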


\begin{proof}
Differentiate $\Delta$:
\[
\dot{\Delta}(t)=\mathbf{v}^*(\mathbf{x}(t),t,c)-\mathbf{v}^*(\mathbf{x}^*(t),t,c)+\mathbf{r}(\mathbf{x}(t),t,c).
\]
Let $y(t)\coloneq \Delta(t)/(1-t)$. Then
\[
\dot{y}(t)=\frac{\dot{\Delta}(t)}{1-t}+\frac{\Delta(t)}{(1-t)^2}.
\]
Taking the inner product with $y(t)$ and using $\Delta(t)=(1-t)y(t)$ gives
\[
\frac{1}{2}\frac{d}{dt}\|y(t)\|^2
=\left\langle y(t),\frac{\mathbf{v}^*(\mathbf{x}(t),t,c)-\mathbf{v}^*(\mathbf{x}^*(t),t,c)}{1-t}\right\rangle
+\left\langle y(t),\frac{\mathbf{r}(\mathbf{x}(t),t,c)}{1-t}\right\rangle
+\frac{\|y(t)\|^2}{1-t}.
\]
Apply Assumption~\ref{ass:idealstab_app} with $\Delta=(1-t)y$:
\[
\left\langle y,\frac{\mathbf{v}^*(\mathbf{x},t,c)-\mathbf{v}^*(\mathbf{x}^*,t,c)}{1-t}\right\rangle
\le -\frac{1-m(t)}{(1-t)}\|y\|^2.
\]
Thus the ideal field cancels the singular drift up to $m(t)$:
\[
\frac{1}{2}\frac{d}{dt}\|y\|^2
\le \frac{m(t)}{1-t}\|y\|^2+\left\langle y,\frac{\mathbf{r}(\mathbf{x}(t),t,c)}{1-t}\right\rangle.
\]
For the residual term, add and subtract $\mathbf{r}(\mathbf{x}^*(t),t,c)$ and use Cauchy--Schwarz:
\[
\left\langle y,\frac{\mathbf{r}(\mathbf{x}(t),t,c)}{1-t}\right\rangle
\le \frac{\|y\|}{1-t}\|\mathbf{r}(\mathbf{x}^*(t),t,c)\|
+\frac{\|y\|}{1-t}\|\mathbf{r}(\mathbf{x}(t),t,c)-\mathbf{r}(\mathbf{x}^*(t),t,c)\|.
\]
Using Assumption~\ref{ass:scaledlip_app} and $\|\Delta\|=(1-t)\|y\|$,
\[
\|\mathbf{r}(\mathbf{x}(t),t,c)-\mathbf{r}(\mathbf{x}^*(t),t,c)\|
\le \frac{L_r(t)}{1-t}\|\Delta(t)\|
=L_r(t)\|y(t)\|.
\]
Therefore,
\[
\frac{d}{dt}\|y(t)\|
\le \frac{L_r(t)+m(t)}{1-t}\|y(t)\|+\frac{\|\mathbf{r}(\mathbf{x}^*(t),t,c)\|}{1-t}.
\]
Apply Gronwall from $0$ to $1-\epsilon$ and use $y(0)=0$ (same initial condition):
\[
\|y(1-\epsilon)\|
\le
\exp\!\left(\int_0^{1-\epsilon}\frac{L_r(t)+m(t)}{1-t}dt\right)
\int_0^{1-\epsilon}\frac{\|\mathbf{r}(\mathbf{x}^*(t),t,c)\|}{1-t}dt.
\]
Finally $\Delta(1-\epsilon)=\epsilon\,y(1-\epsilon)$ yields \eqref{eq:onestep_app}. Rewriting
$\|\mathbf{r}\|=\|(1-t)\mathbf{r}\|/(1-t)$ gives the stated form.
\end{proof}

\subsubsection{Multi-step rollout recursion and the role of $k$}
\label{erracc2_multistep}

We now connect the one-step bound to autoregressive rollouts and isolate the effect of $k$.

\paragraph{Rollout definition.}
Let $\mathbf{x}_s$ be the predicted latents and $\mathbf{x}_s^*$ the ideal latents under the ideal flow transition. Let $\mathcal{T}_\theta^{c_s}$ denote the PF-ODE mapping $t:0\to 1-\epsilon$ induced by $\mathbf{v}_\theta(\cdot,t,c_s)$, and $\mathcal{T}^{c_s}$ the ideal mapping induced by $\mathbf{v}^*$. The autoregressive rollout applies $\mathcal{T}_\theta^{c_s}$ \emph{directly} to the previous predicted latent (no inference-time noising):
\begin{equation}
\label{eq:soft_rollout_app}
\mathbf{x}_{s+1}=\mathcal{T}_\theta^{c_s}(\mathbf{x}_s),\qquad
\mathbf{x}_{s+1}^*=\mathcal{T}^{c_s}(\mathbf{x}_s^*).
\end{equation}
Define $\delta_s\coloneq \mathbf{x}_s-\mathbf{x}_s^*$.

\begin{lemma}[Auto-diagnosed contraction of the trained map]
\label{lem:soft_contract_app}
The ideal map $\mathcal{T}^{c}$, fitted on $k$-noised inputs $\tilde{\mathbf{x}}_0^k=(1-k)\mathbf{x}_0+k\mathbf{z}$ to clean targets $\mathbf{x}_1$, has effective Lipschitz $L_T(1-k)$ on the rollout support: for $\mathbf{x},\mathbf{y}$ in the support,
\begin{equation}
\label{eq:soft_contract_app}
\|\mathcal{T}^{c}(\mathbf{x})-\mathcal{T}^{c}(\mathbf{y})\|\le L_T(1-k)\|\mathbf{x}-\mathbf{y}\|.
\end{equation}
\end{lemma}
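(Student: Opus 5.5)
Since $\mathcal{T}^{c}$ is only characterized by its fitting objective, I will make the ``effective Lipschitz'' claim precise by factoring the ideal map. The ideal map is the one fitted to $k$-noised inputs, so it sees the previous latent only through the affine contraction $A_k(\mathbf{x})\coloneq(1-k)\mathbf{x}$, the mean of $\tilde{\mathbf{x}}_0^{\,k}$ given $\mathbf{x}_0=\mathbf{x}$. I will therefore write $\mathcal{T}^{c}=\mathcal{S}^{c}\circ A_k$. Here $\mathcal{S}^{c}$ is the ideal PF-transition from $t=0$ to $t=1-\epsilon$, driven by $\mathbf{v}^*$, acting on the noised source. By \cref{ass:idealmaplip_app}, $\mathcal{S}^{c}$ is $L_T$-Lipschitz on the support of $\tilde{\pi}_0^k$. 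The first step is to justify this factorization. The conditional-mean (population) minimizer of \eqref{eq:flowloss_main} depends on the input only through the bridge state $\mathbf{x}_t^{\,k}$. At $t=0$ this state is $(1-k)\mathbf{x}_0+k\mathbf{z}$. Fixing the realization of $\mathbf{z}$ for two inputs, which is the pathwise coupling already used in Lemma~\ref{lem:rescale_app}, the two initial conditions differ by exactly $(1-k)(\mathbf{x}-\mathbf{y})$.

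With the factorization in place, the estimate is a one-line composition:
\[
\|\mathcal{T}^{c}(\mathbf{x})-\mathcal{T}^{c}(\mathbf{y})\|=\|\mathcal{S}^{c}(A_k\mathbf{x})-\mathcal{S}^{c}(A_k\mathbf{y})\|\le L_T\|A_k\mathbf{x}-A_k\mathbf{y}\|=L_T(1-k)\|\mathbf{x}-\mathbf{y}\|.
\]
I would also check that $A_k\mathbf{x}$ and $A_k\mathbf{y}$, after adding the common shift $k\mathbf{z}$, lie in the set where $L_T$ is valid. This holds whenever $\mathbf{x},\mathbf{y}$ lie in the rollout support, which by hypothesis sits inside the support of $\tilde{\pi}_0^k$. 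If $L_T$ is instead only available from the one-sided bound in \cref{ass:idealstab_app}, I would obtain it by a Grönwall argument on $\|\mathbf{x}^*(t)-\mathbf{y}^*(t)\|$. This mirrors Lemma~\ref{lem:rescale_app} with $\mathbf{r}\equiv 0$ and gives $L_T\le\exp\bigl(\int_0^{1-\epsilon}\tfrac{m(t)-1}{1-t}\,dt\bigr)$, which is finite, and in fact at most $1$, once $m\le 1$.

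The main obstacle is the first step. At inference the map is applied to clean $\mathbf{x}_s$ with no noising, so literally $\mathcal{T}^{c}$ acts on $\mathbf{x}$, not on $(1-k)\mathbf{x}+k\mathbf{z}$. The identification $\mathcal{T}^{c}=\mathcal{S}^{c}\circ A_k$ is exactly the ``auto-diagnosis'' hypothesis: the trained map treats the input as a point of the $k$-thickened source. In the statement this is built into ``on the rollout support'' and ``effective''. My plan is to state this explicitly as the defining property of $\mathcal{T}^{c}$ restricted to the rollout support, namely that deviations are read in the noise-direction component as in Lemma~\ref{lem:softening_main}, and then let the composition bound do the rest. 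If a more intrinsic route is wanted, I would try a Tweedie/denoising argument instead. That argument would show that the optimal endpoint predictor at $t=0$ has Jacobian of norm at most $(1-k)$ times that of the clean-conditional map along noise directions. I expect this to need a Gaussian-smoothness assumption on the data and to hold only approximately.
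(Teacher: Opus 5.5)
Your proposal takes essentially the same route as the paper. The paper's idealized read (i) is your shared-noise coupling, $\tilde{\mathbf{x}}^{k}-\tilde{\mathbf{y}}^{k}=(1-k)(\mathbf{x}-\mathbf{y})$, composed with the intrinsic Lipschitz constant $L_T$ of the ideal PF-transition, and your Tweedie fallback matches its Bayesian-shrinkage discussion. As you correctly flag, identifying the noise-free inference map with this composition is posited rather than derived, and the paper does the same by writing the bound itself into Assumption~\ref{ass:idealmaplip_app}; for that reason, citing that assumption for the Lipschitzness of $\mathcal{S}^{c}$ alone slightly misreads it, but your Gr\"onwall argument from Assumption~\ref{ass:idealstab_app} supplies $L_T$ independently.
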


\begin{remark}[Two equivalent reads of the $(1-k)$ factor]
\textbf{(i) Shared-noise cancellation (idealized).} Hypothetically forming $k$-noised pairs $\tilde{\mathbf{x}}^{k}=(1-k)\mathbf{x}+k\mathbf{z}$ and $\tilde{\mathbf{y}}^{k}=(1-k)\mathbf{y}+k\mathbf{z}$ with shared $\mathbf{z}$, the noise cancels in the difference: $\tilde{\mathbf{x}}^k-\tilde{\mathbf{y}}^k=(1-k)(\mathbf{x}-\mathbf{y})$. Composed with the ideal map's intrinsic Lipschitz $L_T$, this yields the bound \eqref{eq:soft_contract_app}.
\textbf{(ii) Auto-diagnosed denoising (our setup).} Without explicit inference-time noising, the trained map---fitted to map $\tilde{\pi}_0^k$ to clean targets on the manifold---implicitly treats input deviations as residual noise of magnitude $\sim k$ and contracts them by $(1-k)$ during the projection back to the manifold. The two reads agree on \eqref{eq:soft_contract_app}; \cref{tab:k_sweep} verifies the monotone scaling empirically.
\end{remark}

\paragraph{Does auto-diagnosis collapse different inputs to the same output?}
A natural concern with the auto-diagnosis read: if the trained map treats every input as a noisy observation of an underlying clean state, two distinct inputs $\mathbf{x}, \mathbf{y}$ might be conflated as ``the same clean state plus different noise'' and mapped to a single target---making the map \emph{constant} (Lipschitz $0$) rather than $L_T(1-k)$-Lipschitz. The resolution rests on how Bayesian shrinkage trades off input information against context.

The optimal Bayes predictor under training-time noising at level $k$ is
\begin{equation}
\label{eq:bayes_app}
\mathcal{T}^{*,c}(\mathbf{x}) = \mathbb{E}\big[\mathbf{x}_1 \,\big|\, \tilde{\mathbf{x}}_0^k = \mathbf{x},\, c\big],
\end{equation}
which assigns each input $\mathbf{x}$ to its \emph{own} posterior over the underlying clean state, then propagates that posterior to the next-step prediction:
\begin{itemize}
\item At $k\!=\!0$: $\tilde{\mathbf{x}}_0^k = \mathbf{x}_0$, the input is the clean state itself, and \eqref{eq:bayes_app} reduces to the deterministic next-step map. Lipschitz $L_T$ (no shrinkage).
\item At $k\!=\!1$: $\tilde{\mathbf{x}}_0^k = \mathbf{z}$ is pure noise carrying no information about $\mathbf{x}_0$, so \eqref{eq:bayes_app} depends only on $c$ and is constant in $\mathbf{x}$. Lipschitz $0$. Crucially, this is not the failure it appears to be: at $k\!=\!1$ the predictive signal flows entirely through $c_s$, the per-trajectory context updated on past model predictions \eqref{eq:condupdate}. Different rollout trajectories produce different $c_s$, hence different outputs.
\item At $k\!\in\!(0,1)$: input perturbations contribute a posterior-weighted signal. Standard Bayesian shrinkage on a Gaussian observation $\mathbf{y}=(1{-}k)\mathbf{x}_0+k\mathbf{z}$ gives $\mathbb{E}[\mathbf{x}_0\mid\mathbf{y}]=\frac{1-k}{(1-k)^2+k^2}\mathbf{y}$, with input-to-posterior-mean sensitivity scaling as $(1{-}k)$ to leading order. Composed with the underlying dynamics' Lipschitz $L_T$ along the manifold, the trained map's effective Lipschitz on the rollout support is bounded by $L_T(1{-}k)$.
\end{itemize}
The contraction is therefore a graded \emph{shrinkage of input-to-output sensitivity}, not a conflation of inputs. Different inputs continue to produce different outputs, but with reduced sensitivity in the noise direction. The empirical $k$-sweep (\cref{tab:k_sweep}) confirms this: $k\!=\!1.0$ does not produce a degenerate predictor---it yields the lowest 10-step error of the sweep---because the context channel $c_s$ carries the autoregressive signal that the input channel has surrendered.

\begin{assumption}[Effective Lipschitz of the ideal PF transition on the rollout support]
\label{ass:idealmaplip_app}
For fixed $c$, the ideal mapping $\mathcal{T}^{c}$ admits the effective Lipschitz bound \eqref{eq:soft_contract_app} on the rollout support, with intrinsic constant $L_T$ and contraction factor $(1-k)$ from training-time input noising at level $k$.
\end{assumption}

\begin{theorem}[Multi-step error recursion under auto-diagnosed contraction]
\label{thm:multistep_app}
Assume \ref{ass:scaledlip_app}, \ref{ass:idealstab_app}, and \ref{ass:idealmaplip_app}.
Let $B(c_s,k,\epsilon)$ denote the one-step deviation bound from Lemma~\ref{lem:rescale_app} evaluated at step $s$
(i.e., the right-hand side of \eqref{eq:onestep_app} with $c=c_s$).
Then the rollout error satisfies
\begin{equation}
\label{eq:multistep_rec_app}
\|\delta_{s+1}\|
\le L_T(1-k)\|\delta_s\| + B(c_s,k,\epsilon).
\end{equation}
Consequently, if $L_T(1-k)<1$,
\begin{equation}
\label{eq:multistep_bound_app}
\|\delta_n\|
\le \sum_{j=0}^{n-1}\big(L_T(1-k)\big)^{n-1-j}\,B(c_j,k,\epsilon).
\end{equation}
\end{theorem}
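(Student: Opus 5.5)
The plan is the standard "add and subtract" decomposition used for Lemma~\ref{lem:det_rec_app}, with the ideal PF transition $\mathcal{T}^{c_s}$ playing the role of $\Phi$ and Lemma~\ref{lem:rescale_app} playing the role of the one-step error $e_s$. Using the rollout definition \eqref{eq:soft_rollout_app}, I would write
\[
\delta_{s+1}=\mathcal{T}_\theta^{c_s}(\mathbf{x}_s)-\mathcal{T}^{c_s}(\mathbf{x}_s^*)
=\big(\mathcal{T}_\theta^{c_s}(\mathbf{x}_s)-\mathcal{T}^{c_s}(\mathbf{x}_s)\big)+\big(\mathcal{T}^{c_s}(\mathbf{x}_s)-\mathcal{T}^{c_s}(\mathbf{x}_s^*)\big),
\]
and then apply the triangle inequality. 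The second bracket is the carry-over term. Both $\mathbf{x}_s$ and $\mathbf{x}_s^*$ lie on the rollout support, so Assumption~\ref{ass:idealmaplip_app} (equivalently Lemma~\ref{lem:soft_contract_app}) bounds it by $L_T(1-k)\|\delta_s\|$.

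The first bracket is the discrepancy between the learned and ideal PF-ODE flows over $t\in[0,1-\epsilon]$, started from the \emph{same} initial point $\mathbf{x}_s$. This is exactly the setting of Lemma~\ref{lem:rescale_app}: take $\mathbf{x}(t)$ driven by $\mathbf{v}_\theta(\cdot,t,c_s)$ and $\mathbf{x}^*(t)$ driven by $\mathbf{v}^*(\cdot,t,c_s)$, with $\Delta(0)=0$. The lemma then bounds the bracket by the right-hand side of \eqref{eq:onestep_app} with $c=c_s$, which is $B(c_s,k,\epsilon)$ by definition. Adding the two bounds gives \eqref{eq:multistep_rec_app}.

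For the unrolled bound \eqref{eq:multistep_bound_app}, I would induct on $n$, taking $\delta_0=0$ since the rollout and ideal sequences share their initial latent. This mirrors Proposition~\ref{prop:det_bound_app} with $L$ replaced by $\rho\coloneq L_T(1-k)$ and $\|e_j\|$ replaced by $B(c_j,k,\epsilon)$. The inductive step is
\[
\|\delta_{n+1}\|\le \rho\sum_{j<n}\rho^{n-1-j}B_j+B_n=\sum_{j\le n}\rho^{n-j}B_j .
\]
The condition $\rho<1$ is not needed for the inequality itself. It only ensures the sum stays bounded by $\sup_j B_j/(1-\rho)$, which is the steady-state statement of Theorem~\ref{thm:multistep_main}.

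The main obstacle is a support and initial-condition mismatch in invoking Lemma~\ref{lem:rescale_app}. That lemma is stated for trajectories initialized at $\tilde{\mathbf{x}}_0^{\,k}$, and Assumption~\ref{ass:scaledlip_app} only controls the residual on states visited by PF-ODE trajectories initialized from $\tilde{\pi}_0^k$. At inference, however, the flow starts at the un-noised rollout point $\mathbf{x}_s$. I would handle this by stating explicitly that the rollout outputs $\mathbf{x}_s$, and the learned and ideal trajectories launched from them, lie in the support where Assumptions~\ref{ass:scaledlip_app}--\ref{ass:idealstab_app} hold. This is precisely the paper's ``rollout support'' hypothesis (condition (a) in the scope paragraph). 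With that in place, the proof of Lemma~\ref{lem:rescale_app} goes through verbatim, because it uses only equal initial conditions and never the specific form of $\tilde{\mathbf{x}}_0^{\,k}$.

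The same support hypothesis is what licenses applying Assumption~\ref{ass:idealmaplip_app} to the pair $(\mathbf{x}_s,\mathbf{x}_s^*)$. Beyond these two invocations the argument is purely algebraic.
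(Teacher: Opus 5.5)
Your proposal is correct and follows the paper's proof essentially line for line. You use the same add-and-subtract of $\mathcal{T}^{c_s}(\mathbf{x}_s)$, bound term (I) by Lemma~\ref{lem:rescale_app} and term (II) by Assumption~\ref{ass:idealmaplip_app}, and then unroll the recursion. Your explicit hypotheses ($\delta_0=0$, and that the un-noised rollout points $\mathbf{x}_s$ and the flows launched from them lie in the support where Assumptions~\ref{ass:scaledlip_app}--\ref{ass:idealstab_app} hold) simply state what the paper leaves implicit in ``controlled by Lemma'' and ``unrolling gives.''
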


\begin{proof}
Add and subtract $\mathcal{T}^{c_s}(\mathbf{x}_s)$:
\[
\delta_{s+1}
=\mathcal{T}_\theta^{c_s}(\mathbf{x}_s)-\mathcal{T}^{c_s}(\mathbf{x}_s^*)
=\underbrace{\Big(\mathcal{T}_\theta^{c_s}(\mathbf{x}_s)-\mathcal{T}^{c_s}(\mathbf{x}_s)\Big)}_{\text{(I)}}
+\underbrace{\Big(\mathcal{T}^{c_s}(\mathbf{x}_s)-\mathcal{T}^{c_s}(\mathbf{x}_s^*)\Big)}_{\text{(II)}}.
\]
Term (I) is controlled by Lemma~\ref{lem:rescale_app}: $\|{\rm (I)}\|\le B(c_s,k,\epsilon)$.
Term (II) is controlled by Assumption~\ref{ass:idealmaplip_app}:
\[
\|{\rm (II)}\|\le L_T(1-k)\|\delta_s\|.
\]
Combine to obtain \eqref{eq:multistep_rec_app}. Unrolling gives \eqref{eq:multistep_bound_app}.
\end{proof}

\paragraph{Comparison with deterministic rollouts.}
Deterministic operators propagate error through the intrinsic Lipschitz constant $L$ of $\Phi$ (Proposition~\ref{prop:det_bound_app}).
In contrast, flow-matching rollouts satisfy \eqref{eq:multistep_rec_app}, where the carried-over error is damped by $(1-k)$ via the trained map's auto-diagnosed contraction (Lemma~\ref{lem:soft_contract_app}) and the remaining one-step deviation is governed by the learned residual along the PF-ODE (Lemma~\ref{lem:rescale_app}).
Thus, $k$ is not a cosmetic perturbation: even though it does not appear at inference, it directly shapes the trained map's effective Lipschitz $L_T(1-k)$ in the multi-step recursion and enlarges the training support to justify Assumption~\ref{ass:scaledlip_app}.

\section{Dataset description}

\setcounter{figure}{0}
\renewcommand{\thefigure}{B\arabic{figure}}
\label{dataset}
All the data are compressed to float16 (half) precision to enable the Data Distributed Parallel training on a 4 H-100 GPU node. The ratio of train:val:test is 8:1:1. Datasets are sampled with equal probabilities according to the practice in DPOT~\citep{hao2024dpot}.

\paragraph{FNO-v} We upsampled original data from c1p64 to c3p128 (the 2nd and 3rd dimension are filled with zero). The dataset size is expanded from 11.1GB to 21GB. Trajectory count: FNO-v5 -- 15.4k, FNO-v4 -- 368k, FNO-v3 -- 184k.

\paragraph{PDEArena} For the PDEArena-NavierStokes(PA-NS) and PDEArena-NavierStokesCond(PA-NSC), the dataset size is compressed from 60GB to 25GB. For the PDEArena-ShallowWaterEquation(PA-SWE), it was slightly expanded to 62GB from 76.6GB because additional all-zero channels are provided. Trajectory count: PA-NS -- 48k, PA-NSC -- 120k, PA-SWE -- 470k.

\paragraph{PDEBench} For the PDEBench-CompressibleNavierStokes(PB-CNS), unimportant physical fields are filtered. Thus, it becomes 65GB compressed from 551GB. For the PDEBench-ShallowWaterEquation(PB-SWE), it is compressed to 0.3GB from 6.2GB, the 2nd and 3rd dimension is filled with zero. Trajectory count: PB-CNS -- 598k, PB-SWE -- 77.6k.

\paragraph{The Well} For the Well-GrayScott(W-GS), we fill the 3rd dimension with zero, ending up with a 5.3GB data set compressed from 153GB. For the Well-ActiveMatter(W-AM), we downsampled the data from c3p256 to c3p128, and obtained a compressed 1.1GB dataset from 51.3GB. For the Well-PlanetShallowWaterEquation(W-SWE), we downsampled the data from c3p256,512 to c3p128 and filtered out unimportant fields, so the data size is compressed to 9.3GB from 185.8GB. For the Well-RayleighBenard(W-RB), we downsampled the data from c3p512,128 to c3p128, and get a 26GB dataset from 342GB original data. For the Well-ShearFlow(W-SF), it is compressed to 14GB from 547GB by filtering out unimportant fields. For the Well-TurbulentRadiativeLayer2D(W-TR), it is downsampled from c3p128,384 to c3p128, thus compressed to 0.5GB from 6.9GB. For the Well-ViscoElasticInstability(W-VE), it is downsampled from c3p512 to c3p128, thus compressed to 0.5GB from 66GB. Trajectory count: W-GS -- 92.2k, W-AM -- 13.4k, W-SWE -- 96.4k, W-RB -- 266.6k, W-SF -- 175.6k, W-TR -- 7k, W-VE -- 5.3k.

\begin{table}[h]
\caption{Channel mapping for the unified c3p128 format.  Channels
marked with * are zero-padded.}
\label{tab:channel_map}
\centering
\small
\begin{tabular}{llll}
\toprule
System & Original & Retained channels & Padding \\
\midrule
FNO-v5   & c1p64  & vorticity & ch.\,2--3 zero* \\
FNO-v4   & c1p64  & vorticity & ch.\,2--3 zero* \\
FNO-v3   & c1p64  & vorticity & ch.\,2--3 zero* \\
PA-NS    & c3p128 & $u_x, u_y, p$ & --- \\
PA-NSC   & c4p128 & $u_x, u_y, p$ (drop buoyancy) & --- \\
PA-SWE   & c3p128 & $h, u_x, u_y$ & --- \\
PB-CNSL  & c4p128 & $\rho, u_x, p$ (drop $u_y$) & --- \\
PB-CNSH  & c4p128 & $\rho, u_x, p$ (drop $u_y$) & --- \\
PB-SWE   & c3p128 & $h, u_x, u_y$ & --- \\
W-AM     & c1p64  & amplitude & ch.\,2--3 zero* \\
W-GS     & c2p64  & $u, v$ & ch.\,3 zero* \\
W-RB     & c2p128 & $T, \omega$ & ch.\,3 zero* \\
W-SWE    & c3p128 & $h, u_x, u_y$ & --- \\
W-SF     & c1p128 & stream function & ch.\,2--3 zero* \\
W-TR     & c1p128 & tracer & ch.\,2--3 zero* \\
W-VE     & c3p128 & $\sigma_{xx}, \sigma_{xy}, \sigma_{yy}$ & --- \\
\bottomrule
\end{tabular}
\end{table}

\section{Model Architectures Details} \label{apx:imp}

\subsection{Latent Generative Solvers} \label{apx:imp:lgs}

PhyVAE's AdamW optimizer is used with $\beta_1=0.9$ and $\beta_2=0.995$, cosine learning rate schedule with 10\% of linear warm up, and a weight decay of 1e-4; PFlowFT's AdamW optimizer is used with $\beta_1=0.9$ and $\beta_2=0.95$, cosine learning rate schedule with 10\% of linear warm up, and a weight decay of 1e-4. Base learning rates of 1e-4 for a 256 batch size are adjusted linearly to batch sizes to balance convergence speed and training stability.

We stack 12 layers of SiT~\citep{ma2024sitexploringflowdiffusionbased} unit in each FlowFT and 12 heads of attention, given embedding dimension 768 (without further notice, we accompany 1 head per 64 model dimension). LGS has 138M parameter, with a dedicated 1.334GFLOPs per sample, regardless of the latent encoding process which is precomputed.

\subsection{Baselines} \label{apx:imp:baseline}
All baseline methods are trained with 200k steps, with same optimizer configuration as LGS.
On the model side, CNextUNet utilize internal dimension 192, with standard ConvNext block and UNet connections, mounting up to 153M with 17.61GFLOPs per sample. U-AFNO shares the same UNet connection, yet concatenate 16 layers of AFNO blocks, with embedding dimension 1024 and Fourier keep mode 64 in the lowest resolution. U-AFNO has 152M parameter, utilziing 90.305GFLOPs per sample. We implement DPOT with patch embedding (patch size 8) and transposed convolution layer to have a lower dimension token counts internally, where Fourier channel mixing (keeping 64 modes) replaces the self-attention; 16 layers of DPOT unit are stacked, given embedding dimension 1024. DPOT has 160M parameter and 103.224 GFLOPs per sample.

\subsection{Ablation Studies} \label{apx:imp:ablation}
Below we restate each ablation using the conditional objectives \(p(\mathbf{x}_{s+1}\mid \cdot)\) consistent with Sec.~\ref{methods}.

\paragraph{A1: No pyramids.}
Same modeling objective as the full method, but conditioning uses the full-resolution history (no temporal downsampling), increasing attention cost.

\begin{equation}
\label{eq:ablate_A1_obj}
p(\mathbf{x}_{s+1}\mid \tilde{\mathbf{x}}_s,\ t,\ c)p(c\mid  \hat{\mathbf{x}}_{1:s})
\end{equation}
where $\tilde{\mathbf{x}}_s$ is a $k$-noised $\mathbf{x}_s$ at intermediate transport time $t$.

\paragraph{A2: No physics context maintenance.}
We remove the latent dynamics variable \(c_s\) and condition directly on noisy history tokens and time marker:
\begin{equation}
\label{eq:ablate_A2_obj}
p(\mathbf{x}_{s+1}\mid \tilde{\mathbf{x}}_s,\ \hat{\mathbf{x}}_{1:s},\ t),
\end{equation}
where $\hat{\mathbf{x}}_{1:s}$ are generated from $k$-noised $\mathbf{x}_{0:s-1}$ in Eq.~\eqref{eq:softsource_main}.

\paragraph{A3: No noise level \(k\).}
We set \(k=0\) in Eq.~\eqref{eq:softsource_main} but the rest remains the same as ~\cref{eq:ablate_A2_obj}.

\paragraph{A4: No generative solver.}
We replace generative prediction with direct regression:
\begin{equation}
\label{eq:ablate_A4_obj}
\hat{\mathbf{x}}_{s+1}=f(\mathbf{x}_{0:s}),\qquad
\min_\theta\ \mathbb{E}\big[\|\hat{\mathbf{x}}_{s+1}-\mathbf{x}_{s+1}\|_2^2\big].
\end{equation}

Operation-wise, to ablate temporal pyramids, we remove the state interpolation step before computing, which results in a 4-times rise of GFLOPs, provides marginal gain the model performance. To remove the physics context maintenance, we concatenate predicted states and apply cross-attention onto current state embeddings, which assembles the Perceiver approach~\citep{jaegle2021perceivergeneralperceptioniterative}. Removing $k$ knob is convenient, simply mute $k$ during intermediate states synthesis during training. To reduce our model to non-generative model, we continue using Perceiver, and no longer sample diffusion time but only regress the predicted next state given un-transported current state, which is effectively one-step supervision in x-pred target.

\paragraph{Single-component (isolated) ablations.}
The cumulative chain A1$\to$A4 conflates contributions, so we additionally run three single-component variants that remove exactly one ingredient: \textbf{A2S} drops the context $c_s$, \textbf{A3S} sets $k\!=\!0$, and \textbf{A4S} replaces flow matching with direct $\ell_2$ regression. Numbers are in Table~\ref{tab:ablation_single}; the qualitative picture is that each removal answers a different question. A4S asks whether the gain comes from the generative design itself ($18.3\%\!\to\!46.5\%$ at 10-step, $2.5\times$ worse---yes). A3S asks whether input noising is essential to long-horizon stability ($23.5\%$ at 10-step---also yes). A2S asks whether shared conditioning across heterogeneous systems is essential ($24.5\%$---yes again). A1 (no pyramids) marginally improves accuracy at ${\sim}4\times$ FLOPs, confirming pyramids buy efficiency, not accuracy.

\section{Visualization for test sets of 15 PDE systems}

\setcounter{figure}{0}
\renewcommand{\thefigure}{E\arabic{figure}}
\label{visualization}
FNO-v: sampled trajectories are displayed in Fig.~\ref{fig:fno}.
\begin{figure}[ht]
    \centering
    \includegraphics[width=0.5\linewidth]{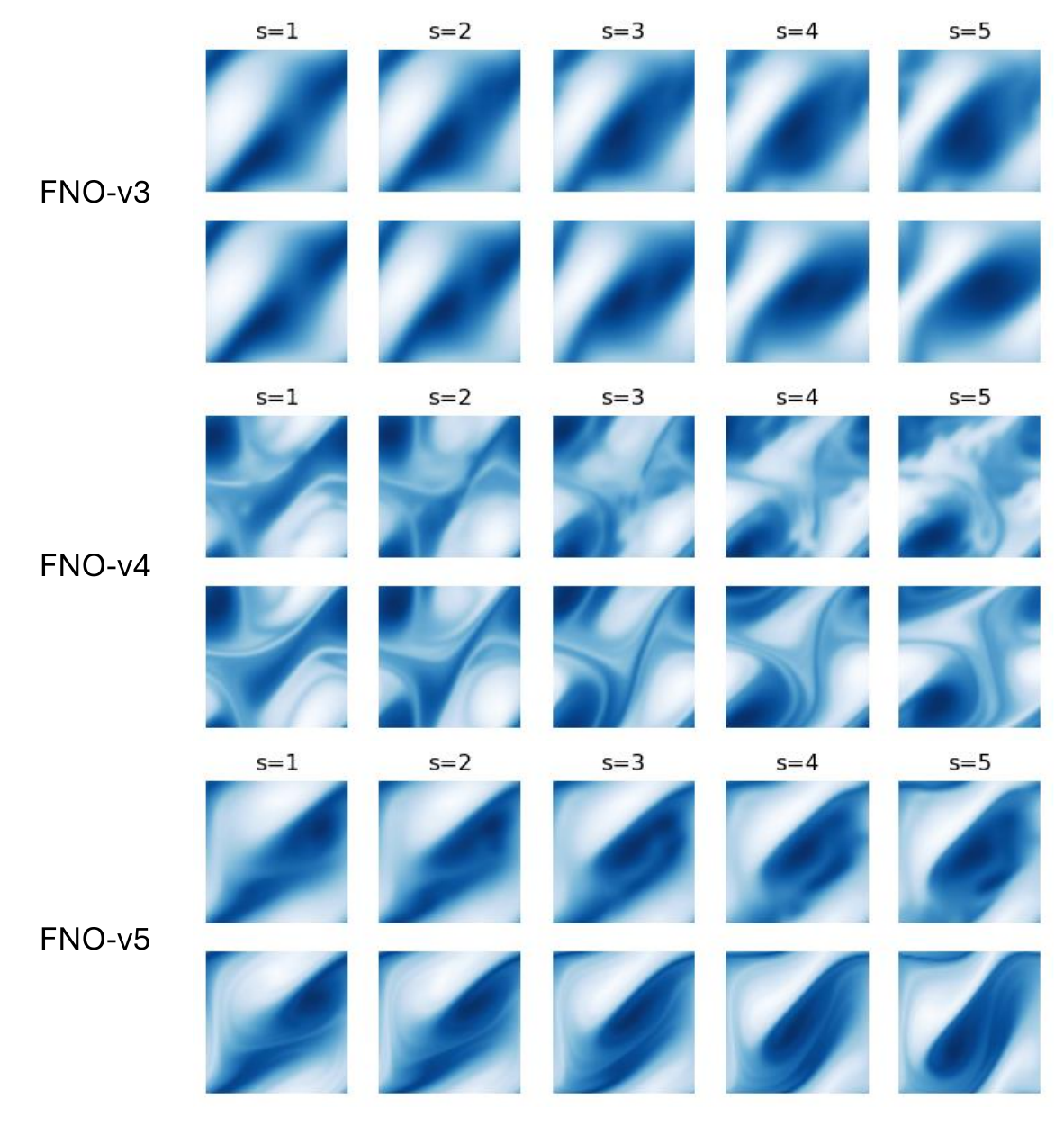}
    \caption{Sampled trajectories from FNO-v3, FNO-v4, FNO-v5. Upper row: prediction. Bottom row: ground truth.}
    \label{fig:fno}
\end{figure}

PDEArena: sampled trajectories are displayed in Fig.~\ref{fig:pa}.
\begin{figure}[ht]
    \centering
    \includegraphics[width=0.5\linewidth]{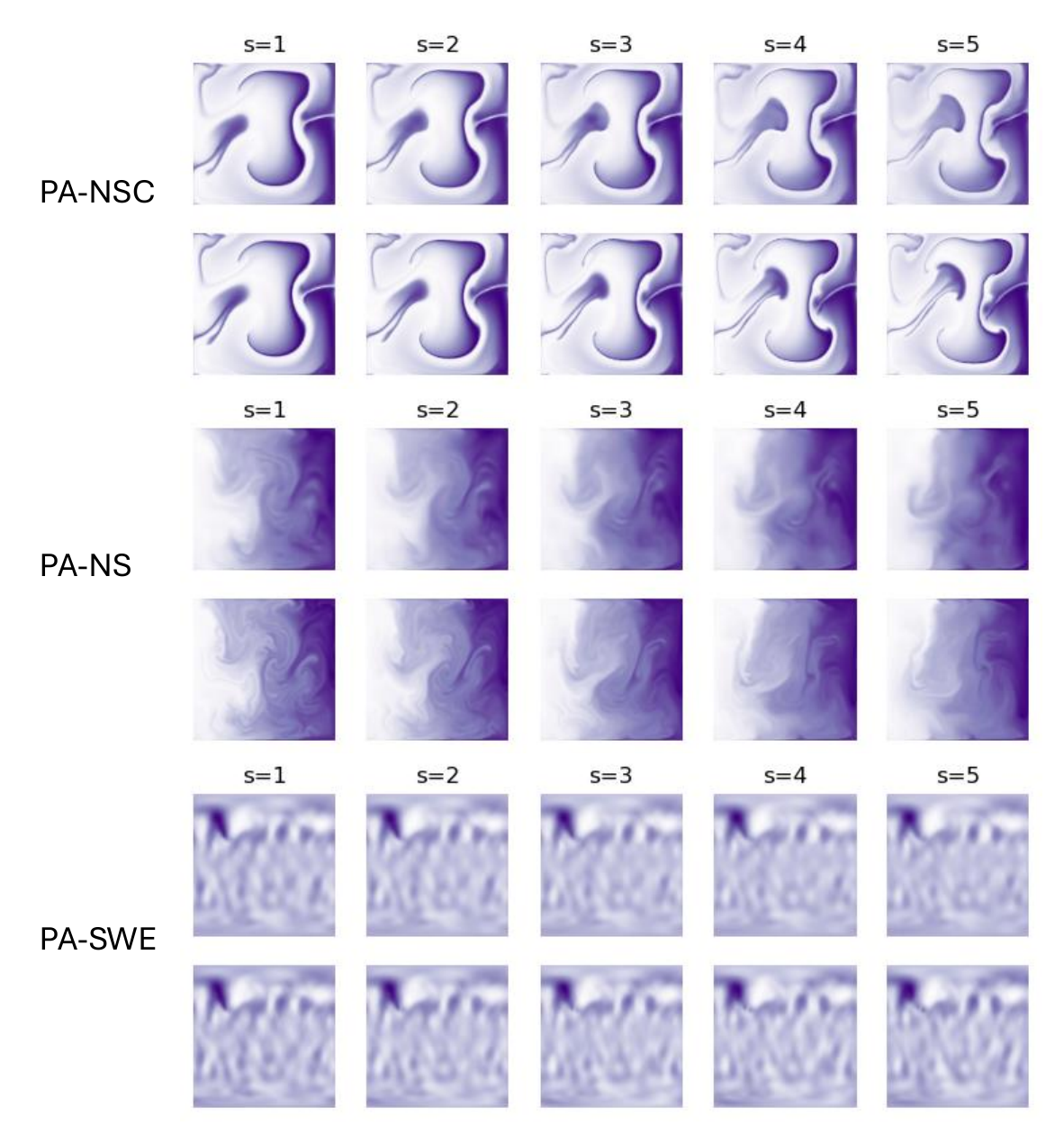}
    \caption{Sampled trajectories from PA-NSC, PA-NS, PA-SWE. Upper row: prediction. Bottom row: ground truth.}
    \label{fig:pa}
\end{figure}

PDEBench: sampled trajectories are displayed in Fig.~\ref{fig:pb}.
\begin{figure}[ht]
    \centering
    \includegraphics[width=0.5\linewidth]{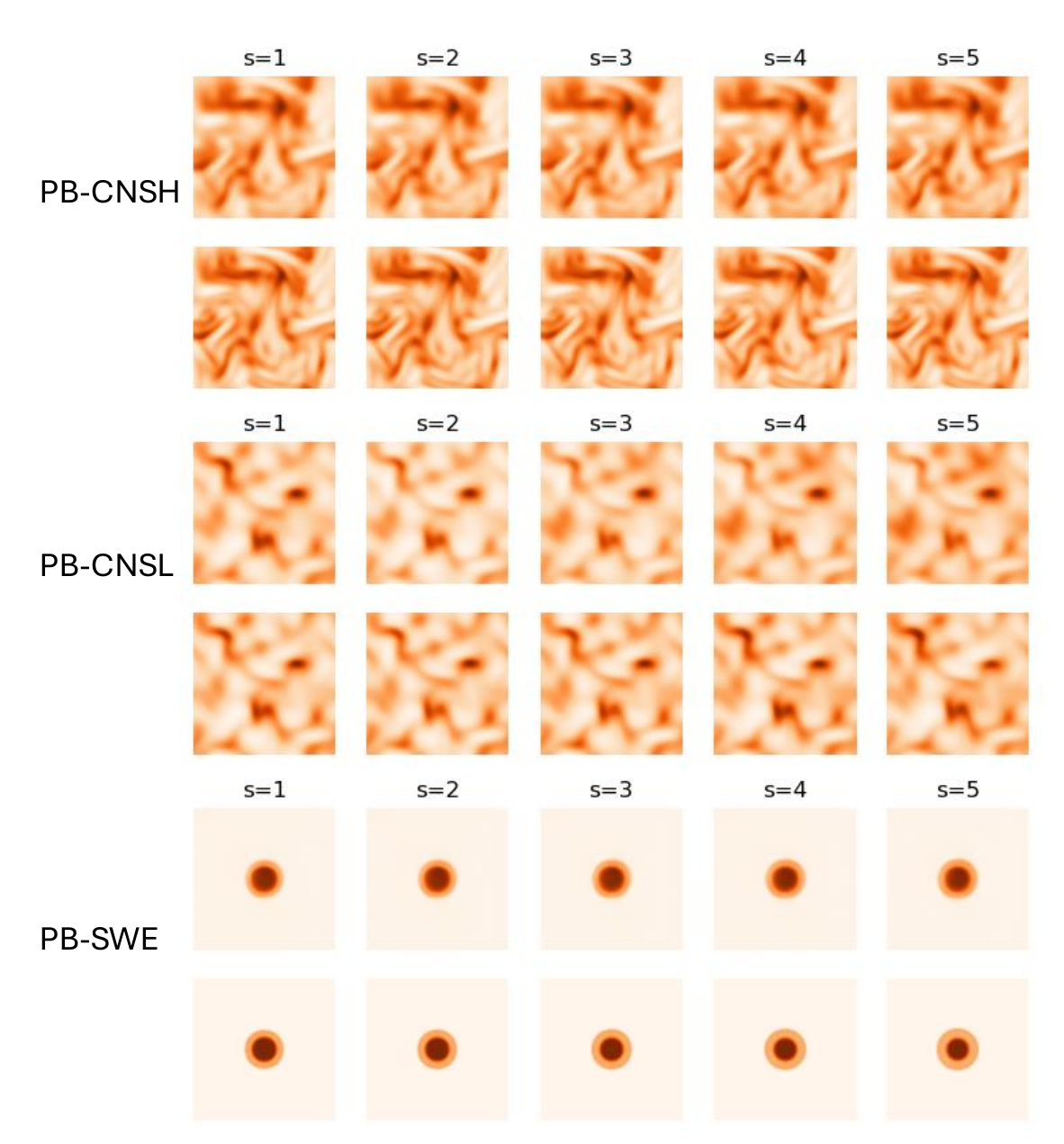}
    \caption{Sampled trajectories from PB-CNSH, PB-CNSL, PB-SWE. Upper row: prediction. Bottom row: ground truth.}
    \label{fig:pb}
\end{figure}

The Well: sampled trajectories are displayed in Fig.~\ref{fig:well}.
\begin{figure}[ht]
    \centering
    \includegraphics[width=\linewidth]{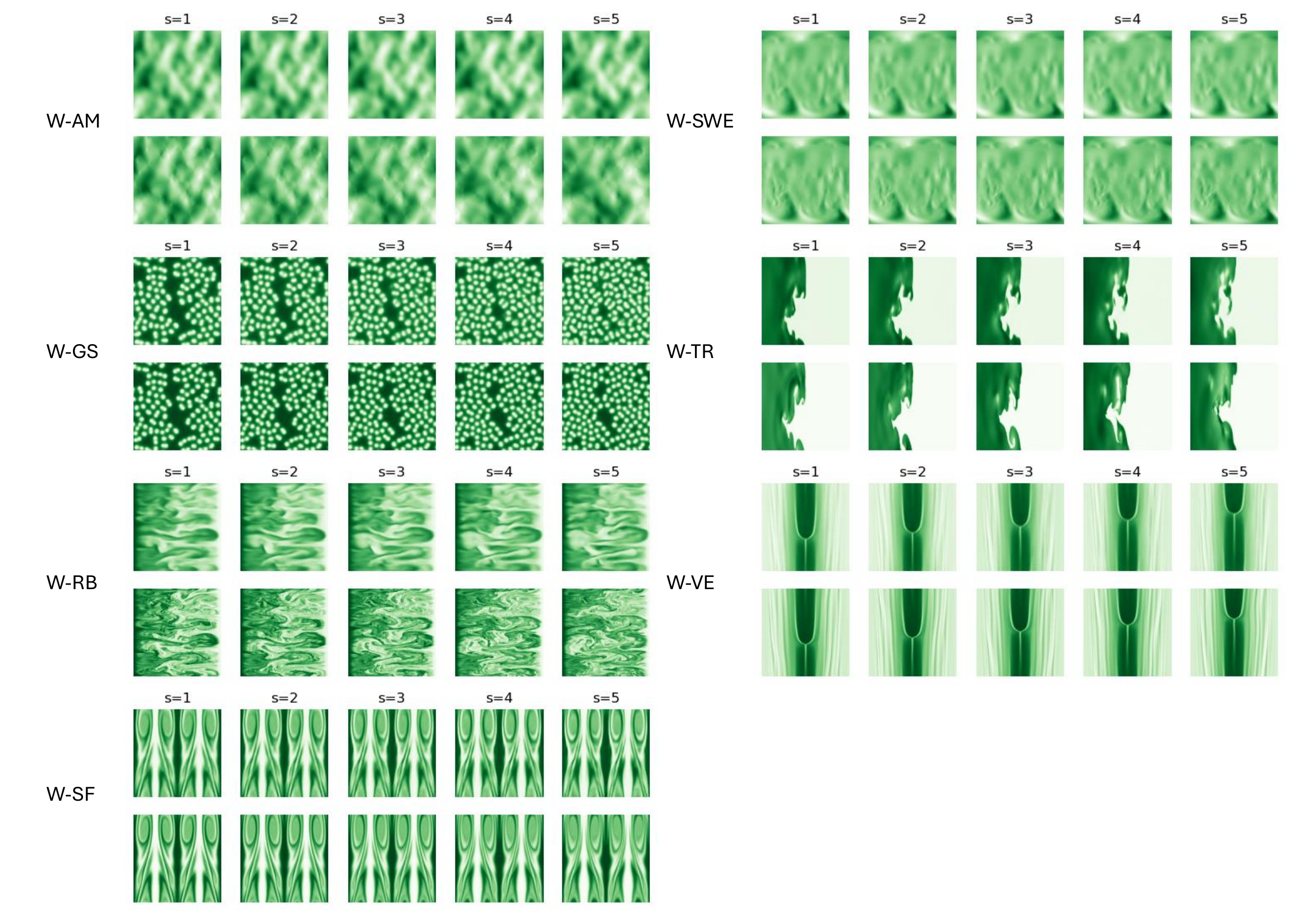}
    \caption{Sampled trajectories from W-AM, W-GS, W-RB, W-SF, W-SWE, W-TR, W-VE. Upper row: prediction. Bottom row: ground truth.}
    \label{fig:well}
\end{figure}

\section{Rollout visualizations}

\setcounter{figure}{0}
\renewcommand{\thefigure}{G\arabic{figure}}
\label{rollout}
Sampled long-term rollout trajectory is provided in Fig.~\ref{fig:pans} (P-NS).
\begin{figure}[ht]
    \centering
    \includegraphics[width=1\linewidth]{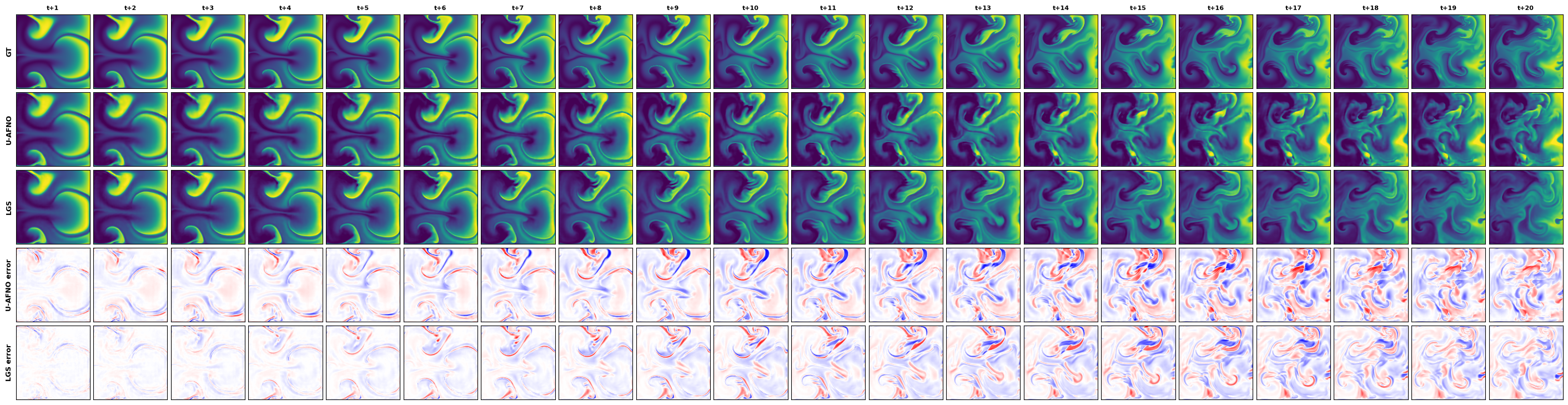}
    \caption{Sampled long-term rollout trajectories from PDEArena-NS by \com{LGS-138M} and U-AFNO-152M.}
    \label{fig:pans}
\end{figure}

\section{Additional Experiments}
\label{apx:tables}

\setcounter{table}{0}
\renewcommand{\thetable}{H\arabic{table}}

\paragraph{Computational profiling.}
All profiling is measured on a single H100 GPU with batch size 64 using \texttt{torch.compile}. PhyVAE encoding is precomputed and cached as a one-time cost (${\sim}70$\,GFLOPs/frame, completing in under 10 GPU-minutes for the full corpus); autoregressive inference uses only the 138M predictor at 1.3\,GFLOPs/sample. PhyVAE decoding (113\,GFLOPs) is incurred only when converting latent predictions back to physical space for visualization.

\begin{table}[h]
\caption{Computational profiling on a single H100 GPU (batch 64).
PhyVAE encoding is precomputed and amortized;
inference cost is dominated by the predictor.}
\label{tab:profiling}
\centering
\small
\begin{tabular}{lcccc}
\toprule
\textsc{Model} & Params & GFLOPs & GPU RAM & Wall-clock \\
\midrule
CNextUNet & 153M & 17.6  & 78.6\,GB & 11.6\,ms \\
DPOT       & 160M & 103.2 & 54.6\,GB &  4.5\,ms \\
U-AFNO     & 152M & 90.3  & 72.7\,GB &  5.7\,ms \\
LGS (predictor) & 138M & 1.3 & 33.1\,GB & 2.4\,ms \\
\bottomrule
\end{tabular}
\end{table}

\paragraph{Training-time $k$ sensitivity.}
We train five separate models, each at a fixed training-time $k \in \{0, 0.3, 0.5, 0.7, 1.0\}$, on the full architecture (temporal pyramids, physics context, and generative solver all intact), cleanly isolating $k$'s effect on rollout stability. All training-$k$ values produce comparable 1-step accuracy (6.0--6.5\%), but higher $k$ consistently improves long-horizon stability: 10-step L2RE decreases from 23.5\% ($k{=}0$) to 18.3\% ($k{=}1.0$), a 22\% relative improvement, validating the $(1{-}k)$ contraction in Theorem~\ref{thm:multistep_app}.

\begin{table}[h]
\caption{Training-time $k$ sweep. Each row is a separately trained model with its $k$ held fixed throughout training; \textbf{inference applies no input noising} in any row---only the autoregressive output is fed to the PF-ODE solver. All variants share the full architecture and training schedule. $k{=}0$ corresponds to ablation A3S; $k{=}1.0$ is the LGS default.}
\label{tab:k_sweep}
\centering
\small
\begin{tabular}{lccc}
\toprule
$k$ & 1-step & 5-step & 10-step \\
\midrule
0.0 (A3S)         & 6.5 & 15.3 & 23.5 \\
0.3               & 6.3 & 14.6 & 22.3 \\
0.5               & 6.3 & 14.7 & 22.1 \\
0.7               & 6.0 & 13.7 & 21.6 \\
1.0 (LGS default) & 6.0 & 12.1 & 18.3 \\
\bottomrule
\end{tabular}
\end{table}

\paragraph{Single-component ablations.}
See Appendix~\ref{apx:imp:ablation} for full design details. Each variant removes exactly one component while keeping all others intact: A2S removes physics context $c_s$ only; A3S sets $k{=}0$ only; A4S replaces flow matching with direct regression only. A4S shows the largest degradation ($18.3\% \to 46.5\%$ at 10-step), confirming the generative solver as the most critical component. A3S and A2S both degrade substantially at long horizons while maintaining reasonable 1-step accuracy, showing $k$ and context contribute through complementary mechanisms.

\begin{table}[h]
\caption{Single-component ablation study.  Each row removes exactly one
component while keeping all others intact. Subscripts denote standard deviation across the 16 PDE systems.}
\label{tab:ablation_single}
\centering
\small
\begin{tabular}{llccc}
\toprule
Ablation & Removed & 1-step & 5-step & 10-step \\
\midrule
LGS (full) & ---              & $6.0_{\pm 3.7}$ & $12.1_{\pm 6.4}$ & $18.3_{\pm 10.1}$ \\
A1          & Temporal pyramids & $5.4_{\pm 3.6}$ & $11.3_{\pm 6.8}$ & $17.6_{\pm 10.3}$ \\
A2S         & Physics context   & $6.9_{\pm 3.6}$ & $16.3_{\pm 10.4}$ & $24.5_{\pm 15.4}$ \\
A3S         & Input noising ($k{=}0$) & $6.5_{\pm 3.4}$ & $15.3_{\pm 9.6}$ & $23.5_{\pm 14.2}$ \\
A4S         & Generative solver & $8.0_{\pm 5.1}$ & $29.6_{\pm 18.7}$ & $46.5_{\pm 26.4}$ \\
\bottomrule
\end{tabular}
\end{table}

\paragraph{Scaled baseline comparison (${\sim}$400M parameters).}
To address the concern that LGS's advantage stems from its larger total parameter count (249M PhyVAE + 138M predictor = 387M), we scale U-AFNO from 152M to 404M by doubling the embedding dimension to 2048, and scale DPOT from 160M to 396M similarly. All scaled models are trained with the same schedule and optimizer. Scaled baselines close much of the 1-step gap (4.2--4.3\% vs.\ LGS 6.0\%), but LGS still achieves the best 10-step accuracy (18.3\%) vs.\ U-AFNO-404M (23.9\%) and DPOT-396M (19.7\%), confirming the advantage comes from the generative design rather than model size.

\begin{table}[h]
\caption{Scaled baseline comparison.  Baselines scaled to
${\sim}400$M parameters to match LGS's total count.}
\label{tab:scaled_baselines}
\centering
\small
\begin{tabular}{lcccc}
\toprule
Model & Params & 1-step & 5-step & 10-step \\
\midrule
U-AFNO-152M        & 152M & 5.5 & 17.2 & 29.1 \\
U-AFNO-404M        & 404M & 4.2 & 13.4 & 23.9 \\
DPOT-160M          & 160M & 5.6 & 19.8 & 33.8 \\
DPOT-396M          & 396M & 4.3 & 11.5 & 19.7 \\
LGS (138M active)  & 387M & 6.0 & 12.1 & 18.3 \\
\bottomrule
\end{tabular}
\end{table}

\paragraph{Generative baseline comparison.}
We adapt two recent generative PDE methods to our autoregressive benchmark. \textbf{DiffusionPDE}~\citep{huang2024diffusionpdegenerativepdesolvingpartial} was originally designed for inverse/reconstruction problems; we adapt it by concatenating 4 previous frames as UNet conditioning (12+3 input channels) using the EDM framework with Heun's 2nd-order sampler (50 steps). \textbf{FunDiff}~\citep{wang2025fundiffdiffusionmodelsfunction} operates in function space; we adapt it by encoding frames via its Function AutoEncoder into latent tokens, fusing 4-frame history via projection, and training a rectified flow in FAE latent space with Euler ODE (100 steps). LGS's latent-space flow matching with input noising provides clear advantages over both alternative generative strategies.

\begin{table}[h]
\caption{Comparison with adapted generative baselines.}
\label{tab:generative_baselines}
\centering
\small
\begin{tabular}{llccc}
\toprule
Model & Type & 1-step & 5-step & 10-step \\
\midrule
DiffusionPDE (adapted) & Diffusion (EDM) & 7.5 & 25.4 & 42.5 \\
FunDiff (adapted)      & Function-space  & 10.4 & 29.1 & 44.8 \\
LGS                    & Flow matching   & 6.0 & 12.1 & 18.3 \\
\bottomrule
\end{tabular}
\end{table}

\paragraph{PhyVAE latent channel ablation.}
We ablate the PhyVAE latent channel count following Hansen-Estruch et al.\ (2025), keeping the spatial resolution fixed at $16{\times}16$ and retraining the dynamics predictor from scratch on each variant's latent trajectories. The default c16 achieves the best 10-step accuracy (18.3\%). Both c8 (higher compression) and c32 (lower compression) degrade, suggesting too few channels lose dynamical information while too many introduce redundancy that hinders temporal learning.

\begin{table}[h]
\caption{PhyVAE latent channel ablation.  For each variant, the dynamics
predictor is retrained from scratch on the corresponding latent trajectories.}
\label{tab:vae_ablation}
\centering
\small
\begin{tabular}{lcccc}
\toprule
Variant & Latent Shape & Recon.\ L2RE & 1-step & 10-step \\
\midrule
PhyVAE-c8            & c8p16  & 3.7 & 6.8 & 22.0 \\
PhyVAE-c16 (default) & c16p16 & 3.2 & 6.0 & 18.3 \\
PhyVAE-c32           & c32p16 & 3.6 & 6.1 & 20.5 \\
\bottomrule
\end{tabular}
\end{table}

\paragraph{Robustness to observation noise.}
We inject per-frame i.i.d.\ Gaussian noise $\boldsymbol{\epsilon} \sim \mathcal{N}(0, \sigma^2 \mathbf{I})$ into the 4-frame input history on PA-NSC, where $\sigma$ is calibrated to the per-frame signal power at a given SNR. Noise is applied only to the initial condition; subsequent autoregressive steps use model predictions. LGS degrades far more gracefully: at SNR=10\,dB, LGS's 10-step error increases by only $+0.51\%$ (17.20$\to$17.71\%) vs.\ U-AFNO's $+3.99\%$ (22.30$\to$26.29\%). Two mechanisms contribute: (i) PhyVAE's $12\times$ spatial compression acts as an implicit denoiser; (ii) training with $k{>}0$ builds robustness to perturbed inputs absent in deterministic models.

\begin{table}[h]
\caption{Robustness to observation noise on PA-NSC.  Additive Gaussian
noise injected into the 4-frame input history only.}
\label{tab:noise_robustness}
\centering
\small
\begin{tabular}{lcccc}
\toprule
SNR (dB) & LGS 1-step & LGS 10-step & U-AFNO 1-step & U-AFNO 10-step \\
\midrule
Clean         & 4.30 & 17.20 &  5.10 & 22.30 \\
30 (mild)     & 4.51 & 16.38 &  5.59 & 24.29 \\
20 (moderate) & 4.79 & 17.07 &  7.45 & 23.55 \\
10 (heavy)    & 6.23 & 17.71 & 14.29 & 26.29 \\
\bottomrule
\end{tabular}
\end{table}

\paragraph{Robustness to spatial masking.}
We apply random binary masks that zero out spatial locations with probability $p_{\text{mask}}$, shared across all channels within a frame, applied only to the 4-frame initial condition on PA-NSC. This simulates partial observations from sparse sensors or occluded measurements. At 50\% masking, LGS reaches 30.89\% at 10-step vs.\ U-AFNO's 57.32\%. The PhyVAE encoder's spatial compression naturally attenuates localized dropouts, and the $k$-noised training provides additional robustness to distribution-shifted inputs.

\begin{table}[h]
\caption{Robustness to spatial masking on PA-NSC.  Random binary masks
applied only to the 4-frame initial condition.}
\label{tab:mask_robustness}
\centering
\small
\begin{tabular}{lcccc}
\toprule
Mask \% & LGS 1-step & LGS 10-step & U-AFNO 1-step & U-AFNO 10-step \\
\midrule
0\% (clean) &  4.30 & 17.20 &  5.10 & 22.30 \\
10\%        &  7.13 & 18.79 & 18.20 & 27.42 \\
20\%        & 10.52 & 20.82 & 28.40 & 32.40 \\
50\%        & 23.68 & 30.89 & 58.81 & 57.32 \\
\bottomrule
\end{tabular}
\end{table}

\paragraph{Per-system single-component ablations.}
Table~\ref{tab:ablation} breaks down the single-component ablation results across all 16 systems at 1, 5, and 10 autoregressive steps. A4S (no generative solver) degrades most severely on complex systems such as W-AM ($4.4\%{\to}75.6\%$) and W-SWE ($4.9\%{\to}98.3\%$) at 10-step, where deterministic regression cannot recover from compounding errors. A2S (no context) and A3S ($k{=}0$) show pronounced degradation on systems with diverse dynamics (PA-NS, PA-SWE), confirming that both context and input noising are essential for cross-system stability.

\begin{table}[ht]
\centering
\setlength{\tabcolsep}{2pt}
\caption{Per-system single-component ablation study at 1, 5, and 10 autoregressive steps. Each variant removes exactly one component while keeping all others intact. The unit is L2RE in percentage. Subscripts in the Avg.\ column denote standard deviation across the 16 PDE systems.}
\label{tab:ablation}
\begin{small}
\begin{tabular}{llcccccccc}
\toprule
Steps & Model & FNO-v5 & FNO-v4 & FNO-v3 & PA-NS & PA-NSC & PA-SWE & PB-CNSL & PB-CNSH \\
\midrule
\multirow{5}{*}{1-step}
 & LGS (full)        & 4.7  & 8.8  & 1.2  & 16.6 & 4.3  & 8.2  & 1.6 & 4.9 \\
 & A1 ($-$Pyramids)   & 3.6  & 6.6  & 1.1  & 16.3 & 3.7  & 7.9  & 1.6 & 4.1 \\
 & A2S ($-$Context)   & 4.5  & 10.2 & 1.5  & 15.3 & 6.0  & 11.7 & 1.5 & 4.2 \\
 & A3S ($k{=}0$)      & 4.2  & 9.1  & 1.3  & 13.9 & 5.9  & 11.4 & 1.4 & 4.0 \\
 & A4S ($-$Generative) & 4.6  & 7.8  & 2.4  & 5.6  & 15.6 & 12.8 & 0.9 & 6.6 \\
\midrule
\multirow{5}{*}{5-step}
 & LGS (full)        & 13.4 & 16.5 & 4.6  & 23.1 & 11.5 & 24.3 & 3.2 & 7.0 \\
 & A1 ($-$Pyramids)   & 9.8  & 14.4 & 3.6  & 27.4 & 9.7  & 22.4 & 3.0 & 6.4 \\
 & A2S ($-$Context)   & 14.9 & 18.8 & 6.7  & 36.3 & 16.7 & 37.8 & 6.1 & 6.7 \\
 & A3S ($k{=}0$)      & 12.4 & 16.4 & 5.7  & 32.3 & 17.3 & 36.2 & 5.8 & 7.0 \\
 & A4S ($-$Generative) & 17.2 & 28.7 & 7.6  & 17.7 & 46.4 & 49.1 & 3.7 & 16.2 \\
\midrule
\multirow{5}{*}{10-step}
 & LGS (full)        & 24.7 & 29.8 & 9.2  & 27.5 & 17.2 & 37.6 & 5.1 & 8.2 \\
 & A1 ($-$Pyramids)   & 20.2 & 24.4 & 6.6  & 29.6 & 15.4 & 35.8 & 4.8 & 7.8 \\
 & A2S ($-$Context)   & 26.5 & 28.5 & 12.8 & 48.6 & 25.5 & 60.6 & 13.2 & 8.4 \\
 & A3S ($k{=}0$)      & 23.6 & 25.5 & 11.5 & 43.5 & 25.2 & 57.6 & 13.4 & 9.1 \\
 & A4S ($-$Generative) & 37.8 & 58.0 & 13.4 & 29.5 & 45.6 & 73.4 & 7.3 & 23.3 \\
\bottomrule
\end{tabular}
\end{small}

\vspace{0.3em}
\begin{small}
\begin{tabular}{llccccccccc}
\toprule
Steps & Model & PB-SWE & W-AM & W-GS & W-SWE & W-RB & W-SF & W-TR & W-VE & Avg. \\
\midrule
\multirow{5}{*}{1-step}
 & LGS (full)        & 6.4  & 4.4  & 3.8 & 4.9  & 6.7  & 9.7  & 5.0  & 4.2 & $6.0_{\pm 3.7}$ \\
 & A1 ($-$Pyramids)   & 6.4  & 4.0  & 3.0 & 4.9  & 6.7  & 9.2  & 4.5  & 3.5 & $5.4_{\pm 3.6}$ \\
 & A2S ($-$Context)   & 6.8  & 7.2  & 4.9 & 5.4  & 7.9  & 10.1 & 6.6  & 6.9 & $6.9_{\pm 3.6}$ \\
 & A3S ($k{=}0$)      & 6.5  & 6.8  & 4.1 & 5.0  & 7.8  & 9.9  & 6.3  & 5.8 & $6.5_{\pm 3.4}$ \\
 & A4S ($-$Generative) & 7.9  & 21.3 & 7.9 & 9.0  & 7.3  & 8.4  & 5.9  & 3.6 & $8.0_{\pm 5.1}$ \\
\midrule
\multirow{5}{*}{5-step}
 & LGS (full)        & 10.3 & 19.6 & 12.4 & 5.7  & 16.1 & 10.0 & 9.8  & 5.8  & $12.1_{\pm 6.4}$ \\
 & A1 ($-$Pyramids)   & 10.4 & 19.0 & 11.2 & 5.9  & 14.1 & 8.5  & 9.3  & 5.4  & $11.3_{\pm 6.8}$ \\
 & A2S ($-$Context)   & 13.1 & 30.4 & 13.5 & 6.9  & 22.3 & 11.1 & 13.7 & 6.3  & $16.3_{\pm 10.4}$ \\
 & A3S ($k{=}0$)      & 12.2 & 27.1 & 13.9 & 6.5  & 21.6 & 11.1 & 13.4 & 5.3  & $15.3_{\pm 9.6}$ \\
 & A4S ($-$Generative) & 29.3 & 61.7 & 55.2 & 58.1 & 26.2 & 24.5 & 15.1 & 16.2 & $29.6_{\pm 18.7}$ \\
\midrule
\multirow{5}{*}{10-step}
 & LGS (full)        & 18.4 & 34.2 & 14.8 & 10.5 & 22.6 & 12.7 & 12.6 & 6.9  & $18.3_{\pm 10.1}$ \\
 & A1 ($-$Pyramids)   & 20.4 & 38.1 & 15.0 & 10.9 & 21.5 & 12.6 & 12.2 & 5.7  & $\mathbf{17.6}_{\pm 10.3}$ \\
 & A2S ($-$Context)   & 22.4 & 36.9 & 16.5 & 11.3 & 41.9 & 13.1 & 17.9 & 8.5  & $24.5_{\pm 15.4}$ \\
 & A3S ($k{=}0$)      & 22.4 & 34.6 & 16.9 & 11.2 & 41.3 & 13.8 & 18.3 & 7.6  & $23.5_{\pm 14.2}$ \\
 & A4S ($-$Generative) & 71.8 & 75.6 & 75.5 & 98.3 & 46.6 & 41.1 & 22.5 & 24.8 & $46.5_{\pm 26.4}$ \\
\bottomrule
\end{tabular}
\end{small}
\end{table}

\paragraph{Per-system scaled baseline comparison.}
Table~\ref{tab:scaled} shows per-system 10-step L2RE for scaled baselines. Despite U-AFNO-404M and DPOT-396M improving substantially on most systems, LGS maintains the best average (18.3\%) with only 138M active parameters. The scaled baselines close the gap on well-behaved systems (e.g., FNO-v3, W-GS) but cannot match LGS on turbulent/chaotic systems (PA-SWE, W-AM) where long-horizon stability matters most.

\begin{table}[ht]
\centering
\setlength{\tabcolsep}{2pt}
\caption{Per-system scaled baseline comparison at 10 autoregressive steps. Scaling U-AFNO to 404M and DPOT to 396M does not close the long-horizon gap. The unit is L2RE in percentage. Subscripts in the Avg.\ column denote standard deviation across the 16 PDE systems.}
\label{tab:scaled}
\begin{small}
\begin{tabular}{lcccccccc}
\toprule
Model & FNO-v5 & FNO-v4 & FNO-v3 & PA-NS & PA-NSC & PA-SWE & PB-CNSL & PB-CNSH \\
\midrule
U-AFNO-152M   & 28.0 & 50.6 & 29.5 & 30.3 & 22.3 & 55.2 & 6.9  & 12.4 \\
U-AFNO-404M   & 43.2 & 35.3 & 14.0 & 28.2 & 18.7 & 43.7 & 6.3  & 14.8 \\
DPOT-160M     & 36.0 & 32.9 & 34.3 & 29.3 & 27.8 & 64.2 & 4.6  & 9.2  \\
DPOT-396M     & 19.3 & 18.8 & 7.1  & 17.6 & 27.1 & 41.6 & 5.6  & 10.3 \\
LGS-249+138M  & 24.7 & 29.8 & 9.2  & 27.5 & 17.2 & 37.6 & 5.1  & 8.2  \\
\bottomrule
\end{tabular}
\end{small}

\vspace{0.3em}
\begin{small}
\begin{tabular}{lccccccccc}
\toprule
Model & PB-SWE & W-AM & W-GS & W-SWE & W-RB & W-SF & W-TR & W-VE & Avg. \\
\midrule
U-AFNO-152M   & 39.5 & 61.2 & 27.6 & 17.9 & 34.1 & 27.3 & 22.5 & 8.8  & $29.6_{\pm 15.7}$ \\
U-AFNO-404M   & 43.8 & 38.7 & 11.2 & 17.2 & 9.9  & 27.1 & 21.9 & 8.8  & $23.9_{\pm 13.4}$ \\
DPOT-160M     & 38.0 & 56.2 & 27.4 & 18.1 & 36.5 & 26.2 & 23.1 & 9.6  & $29.6_{\pm 15.8}$ \\
DPOT-396M     & 38.2 & 43.1 & 9.1  & 17.9 & 9.0  & 25.1 & 15.5 & 9.3  & $19.7_{\pm 12.3}$ \\
LGS-249+138M  & 18.4 & 34.2 & 14.8 & 10.5 & 22.6 & 12.7 & 12.6 & 6.9  & $\mathbf{18.3}_{\pm 10.1}$ \\
\bottomrule
\end{tabular}
\end{small}
\end{table}

\paragraph{Per-system long-horizon rollout.}
Table~\ref{tab:rollout} provides per-system L2RE at 1, 5, 10, and 20 autoregressive steps. At 20 steps, LGS achieves the best L2RE on the majority of systems. The gap is most pronounced on chaotic/turbulent systems (PA-NS, W-RB) where deterministic baselines exhibit catastrophic error accumulation.

\begin{table}[ht]
\centering
\setlength{\tabcolsep}{2pt}
\caption{Per-system long-horizon rollout evaluation at 1, 5, 10, and 20 autoregressive steps. All models use $\sim$150M parameters. The unit is L2RE in percentage. Subscripts in the Avg.\ column denote standard deviation across the 16 PDE systems (across 11 systems for 20-step, where some systems do not have 20-step trajectories).}
\label{tab:rollout}
\begin{small}
\begin{tabular}{llcccccccc}
\toprule
Steps & Model & FNO-v5 & FNO-v4 & FNO-v3 & PA-NS & PA-NSC & PA-SWE & PB-CNSL & PB-CNSH \\
\midrule
\multirow{4}{*}{1-step}
 & CNextUNet-153M & 6.18 & 28.3 & 2.64 & 29.5 & 21.4 & 10.5 & 73.6 & 73.0 \\
 & DPOT-160M       & 3.78 & 8.00 & 1.39 & 20.9 & 5.61 & 9.50 & 1.32 & 4.48 \\
 & U-AFNO-152M     & 3.45 & 8.40 & 1.39 & 16.4 & 5.13 & 6.67 & 1.33 & 4.28 \\
 & LGS-249+138M    & 4.74 & 8.77 & 1.22 & 16.6 & 4.35 & 8.18 & 1.60 & 4.90 \\
\midrule
\multirow{4}{*}{5-step}
 & CNextUNet-153M & 28.5 & 48.0 & 24.8 & 48.4 & 41.9 & 55.9 & 80.7 & 80.7 \\
 & DPOT-160M       & 14.8 & 18.5 & 10.7 & 32.4 & 15.1 & 35.3 & 3.0  & 7.6  \\
 & U-AFNO-152M     & 13.5 & 27.2 & 10.7 & 31.3 & 11.7 & 30.9 & 3.7  & 9.1  \\
 & LGS-249+138M    & 13.4 & 16.5 & 4.6  & 23.1 & 11.5 & 24.3 & 3.2  & 7.0  \\
\midrule
\multirow{4}{*}{10-step}
 & CNextUNet-153M & 54.4 & 68.8 & 55.9 & 46.5 & 54.4 & 104.4 & 79.4 & 79.1 \\
 & DPOT-160M       & 36.0 & 32.9 & 34.3 & 29.3 & 27.8 & 64.2  & 4.6  & 9.2  \\
 & U-AFNO-152M     & 28.0 & 50.6 & 29.5 & 30.3 & 22.3 & 55.2  & 6.9  & 12.4 \\
 & LGS-249+138M    & 24.7 & 29.8 & 9.2  & 27.5 & 17.2 & 37.6  & 5.1  & 8.2  \\
\midrule
\multirow{4}{*}{20-step}
 & CNextUNet-153M & --   & 93.9 & 68.0 & 81.7 & --   & 129.6 & --   & --   \\
 & DPOT-160M       & --   & 46.5 & 81.2 & 36.0 & --   & 112.8 & --   & --   \\
 & U-AFNO-152M     & --   & 83.1 & 41.7 & 41.6 & --   & 77.5  & --   & --   \\
 & LGS-249+138M    & --   & \textbf{38.4} & \textbf{16.0} & \textbf{31.7} & -- & \textbf{60.3} & -- & -- \\
\bottomrule
\end{tabular}
\end{small}

\vspace{0.3em}
\begin{small}
\begin{tabular}{llccccccccc}
\toprule
Steps & Model & PB-SWE & W-AM & W-GS & W-SWE & W-RB & W-SF & W-TR & W-VE & Avg. \\
\midrule
\multirow{4}{*}{1-step}
 & CNextUNet-153M & 7.85 & 12.3 & 5.51 & 15.1 & 10.7 & 7.68 & 8.17 & 20.9 & $20.8_{\pm 22.0}$ \\
 & DPOT-160M       & 5.39 & 4.40 & 3.83 & 3.01 & 9.19 & 7.49 & 6.36 & 3.71 & $5.6_{\pm 4.6}$  \\
 & U-AFNO-152M     & 2.90 & 4.63 & 3.28 & 1.29 & 7.75 & 3.89 & 6.39 & 3.55 & $5.5_{\pm 3.7}$  \\
 & LGS-249+138M    & 6.42 & 4.41 & 3.80 & 4.91 & 6.73 & 9.67 & 4.95 & 4.15 & $6.0_{\pm 3.7}$  \\
\midrule
\multirow{4}{*}{5-step}
 & CNextUNet-153M & 40.2 & 45.6 & 22.3 & 30.3 & 31.7 & 20.9 & 18.9 & 68.2 & $42.9_{\pm 20.1}$ \\
 & DPOT-160M       & 16.8 & 28.2 & 18.3 & 9.0  & 21.9 & 16.0 & 16.1 & 7.0  & $16.9_{\pm 9.0}$ \\
 & U-AFNO-152M     & 17.1 & 32.4 & 16.7 & 7.7  & 20.8 & 15.5 & 16.4 & 6.8  & $17.0_{\pm 9.2}$ \\
 & LGS-249+138M    & 10.3 & 19.6 & 12.4 & 5.7  & 16.1 & 10.0 & 9.8  & 5.8  & $\mathbf{12.1}_{\pm 6.4}$ \\
\midrule
\multirow{4}{*}{10-step}
 & CNextUNet-153M & 77.5  & 60.8 & 29.6 & 35.4 & 53.2 & 31.4 & 24.8 & 82.3 & $58.6_{\pm 22.3}$ \\
 & DPOT-160M       & 38.0  & 56.2 & 27.4 & 18.1 & 36.5 & 26.2 & 23.1 & 9.6  & $29.6_{\pm 15.8}$ \\
 & U-AFNO-152M     & 39.5  & 61.2 & 27.6 & 17.9 & 34.1 & 27.3 & 22.5 & 8.8  & $29.6_{\pm 15.7}$ \\
 & LGS-249+138M    & 18.4  & 34.2 & 14.8 & 10.5 & 22.6 & 12.7 & 12.6 & 6.9  & $\mathbf{18.3}_{\pm 10.1}$ \\
\midrule
\multirow{4}{*}{20-step}
 & CNextUNet-153M & 101.5 & 73.7 & 57.6 & 78.5  & 38.3 & 60.9 & 32.9 & --   & $74.2_{\pm 27.9}$  \\
 & DPOT-160M       & 60.8  & 126.0& 54.7 & 28.5  & 52.7 & 36.0 & 34.4 & --   & $60.9_{\pm 32.6}$  \\
 & U-AFNO-152M     & 76.5  & 57.8 & 33.2 & 69.4  & 33.8 & 53.6 & 48.6 & --   & $56.1_{\pm 18.1}$  \\
 & LGS-249+138M    & \textbf{22.0} & \textbf{65.3} & \textbf{25.2} & \textbf{10.8} & \textbf{33.0} & \textbf{12.7} & \textbf{17.0} & -- & $\mathbf{30.2}_{\pm 18.4}$ \\
\bottomrule
\end{tabular}
\end{small}
\end{table}

\FloatBarrier

\section{Broader Impacts}
\label{sec:broader-impact}

LGS is a neural surrogate for partial differential equations. Its positive impact is amortized cost: a single trained model can replace many high-cost numerical solves in digital twins, parameter sweeps, weather and climate modeling, and design loops in fluids, combustion, and multiphysics. Faster, more reliable PDE surrogates lower the barrier to physically-grounded modeling for groups without large HPC budgets and broaden access to simulation-driven scientific workflows.

The negative impacts are those of any scientific surrogate. Treating model output as ground truth in safety-critical settings (e.g., reactor design, atmospheric forecasting, medical imaging) without continued cross-checking against physical solvers risks confidently wrong predictions, especially under distribution shift beyond the pretraining corpus. Long-horizon stability does not imply physical correctness: a stably-rolled trajectory can still violate conservation laws or drift away from the true solution branch at bifurcations. We mitigate by reporting OOD evaluation (\cref{sec:exp_long}, \cref{tab:adapt}), explicitly bounding the encoder reconstruction floor (\cref{sec:limitations}), and recommending that LGS be deployed alongside, not in place of, physical solvers in any decision-making pipeline.

The released artifacts (preprocessing scripts, training/eval code, model architectures) carry the same misuse risk as any open-source PDE surrogate---low. They produce field reconstructions for scientific simulation data, with no human-subject, surveillance, or generative-content concerns analogous to those of foundation models trained on internet-scale unstructured data.

\end{document}